\documentclass[10pt, onecolumn, a4paper]{article}

\usepackage[verbose=true,letterpaper]{geometry}
\makeatletter
\AtBeginDocument{
  \newgeometry{
    textheight=9in,
    textwidth=5.5in,
    top=1in,
    headheight=12pt,
    headsep=25pt,
    footskip=30pt
  }
}

\widowpenalty=10000
\clubpenalty=10000
\flushbottom
\sloppy

\makeatother
\renewenvironment{abstract}%
{%
  \vskip 0.075in%
  \centerline%
  {\large\bf Abstract}%
  \vspace{0.5ex}%
  \begin{quote}%
}
{
  \par%
  \end{quote}%
  \vskip 1ex%
}

\usepackage{palatino}
\linespread{1.1}
\usepackage[utf8]{inputenc} 
\usepackage[T1]{fontenc}    
\usepackage{xcolor}
\usepackage{hyperref}
\ifdefined\nohyperref\else\ifdefined\hypersetup
  \definecolor{mydarkblue}{rgb}{0,0.08,0.45}
  \hypersetup{ %
    pdftitle={},
    pdfauthor={},
    pdfsubject={},
    pdfkeywords={},
    pdfborder=0 0 0,
    pdfpagemode=UseNone,
    colorlinks=true,
    linkcolor=mydarkblue,
    citecolor=mydarkblue,
    filecolor=mydarkblue,
    urlcolor=mydarkblue,
    pdfview=FitH}
\usepackage{url}            
\usepackage{booktabs}       
\usepackage{amsfonts}       
\usepackage{nicefrac}       
\usepackage{microtype}      
\usepackage{calc}
\usepackage{bm}
\usepackage{amssymb,amsmath}
\usepackage{mathtools}
\mathtoolsset{showonlyrefs}
\usepackage{wrapfig}
\usepackage{color}
\usepackage{caption}
\usepackage{soul}

\usepackage[compress]{natbib}

\usepackage{graphicx}
\usepackage{subfigure}
\usepackage{amsthm}

\usepackage{amsmath}
\usepackage{amssymb}
\usepackage{xcolor}
\usepackage{mathtools}
\usepackage{sidecap}
\usepackage{wrapfig}

\usepackage{enumitem}
\setlist[itemize,enumerate]{leftmargin=*}

\usepackage{mdframed}
\definecolor{theoremcolor}{rgb}{0.94, 0.94, 0.94}
\definecolor{examplecolor}{rgb}{1, 1, 1.0}
\mdfsetup{
    backgroundcolor=theoremcolor,
    linewidth=0pt,
}

\usepackage{xfrac}
\usepackage{comment}

\newmdtheoremenv[linewidth=0.5pt,innerleftmargin=4pt,innerrightmargin=4pt]{definition}{Definition}
\newmdtheoremenv[linewidth=0.5pt,innerleftmargin=4pt,innerrightmargin=4pt]{proposition}{Proposition}
\newmdtheoremenv[linewidth=0pt,innerleftmargin=0pt,innerrightmargin=0pt,backgroundcolor=examplecolor]{example}{Example}
\newmdtheoremenv{corollary}{Corollary}
\newmdtheoremenv{theorem}{Theorem}
\newmdtheoremenv{lemma}{Lemma}

\newcommand{\remove}[1]{}

\usepackage{fancyhdr}
\usepackage{titling}
\usepackage{authblk}

\usepackage{inconsolata}
\usepackage{tikz}

\usepackage{xspace}
\def\concrete{mixed\xspace}
\def\Concrete{Mixed\xspace}

\pagestyle{fancy}
\fancyhf{}
\lhead{Reconciling the Discrete-Continuous Divide}
\fancyhead[R]{\thepage}

\setlength{\droptitle}{-8ex}

\title{\Large\bf Reconciling the Discrete-Continuous Divide:\\ Towards a Mathematical Theory of Sparse Communication\\[2ex]}

\author{\normalsize Andr\'e F.~T.~Martins\textsuperscript{1,2,3}}

\affil{{\small\url{andre.t.martins@tecnico.ulisboa.pt}}}

\affil{\textsuperscript{1}%
  {\small Instituto de Telecomunica\c{c}\~oes, Instituto Superior Técnico, University of Lisbon, Portugal}
  } 
\affil{\textsuperscript{2}%
  {\small Lisbon ELLIS Unit of Machine Learning and Intelligent Systems (LUMLIS)} 
  }
\affil{\textsuperscript{3}%
  {\small Unbabel, Lisbon, Portugal}
  }

\date{\vspace{-5ex}}

\begin{document}

\maketitle

\begin{abstract}
Neural networks and other machine learning models compute continuous representations, while humans communicate with discrete symbols.
Reconciling these two forms of communication is desirable to generate human-readable interpretations or to learn discrete latent variable models, while maintaining end-to-end differentiability. 
Some existing approaches (such as the Gumbel-softmax transformation) build continuous relaxations that are discrete approximations in the zero-temperature limit, while others (such as sparsemax transformations and the hard concrete distribution) produce discrete/continuous hybrids.
In this paper, we build rigorous theoretical foundations for these hybrids. 
Our starting point is a new ``direct sum'' base measure defined on the face lattice of the probability simplex. 
From this measure, we introduce a new entropy function that includes the discrete and differential entropies as particular cases, and has an interpretation in terms of code optimality, as well as two other information-theoretic counterparts that generalize the mutual information and Kullback-Leibler divergences. 
Finally, we introduce ``\concrete languages'' as strings of hybrid symbols and a new \concrete weighted finite state automaton that recognizes a class of regular \concrete languages, generalizing closure properties of regular languages.
\end{abstract}

\section{Introduction}

%
%
%

As of today, 73 years have passed since the seminal  \textit{A Mathematical Theory of Communication} \citep{shannon1948mathematical} set the ground for modern information theory. The legacy of Claude E. Shannon's work is well visible in many contemporary disciplines, including Artificial Intelligence (AI), where the concepts of entropy, mutual information, and the noisy channel model are widely and routinely used. 

Seven decades fast forward, in a world where AI has an accelerated presence in our daily lives,  we are faced with many important challenges brought by the increasing interaction between humans and machines. Some of these challenges lie in the frontier between \textbf{discrete} and \textbf{continuous} domains. Take ``explainable AI'' as an example: How to make a machine able to ``communicate'' a decision's rationale, a thought process, or an internal representation into a form that a human can understand? The best performing AI systems today are built with neural networks, which perform \textit{continuous} computation and produce \textit{continuous}  representations -- by operating in a continuous space, they can be conveniently trained with the gradient backpropagation algorithm \citep{linnainmaa1970representation,werbos1982applications,Rumelhart1988}. 
Humans, however, speak a ``discrete'' language, composed of sequences of symbols. How can these two worlds be brought together?

Historically, discrete and continuous domains have been considered separately in information theory, applied statistics, and engineering applications: in most studies, random variables and sources are either discrete or continuous, but not \textit{both}. 
In signal processing, one needs to opt between discrete (digital) and continuous (analog) communication, whereas analog signals can be converted into digital ones by means of sampling and quantization. 
The use of discrete random variables in neural latent variable models is appealing: they can lead to understandable and human-controllable systems. 
However, the problem of learning these networks is far from solved: existing approaches use gradient surrogates, estimators with high variance, or continuous relaxations (discussed in more detail in \S\ref{sec:why}) as a means to interface the discrete and continuous worlds, with somewhat limited success.  

Since discrete variables and their continuous relaxations are so prevalent, they deserve a rigorous mathematical study. Throughout, we will use the name \textbf{\concrete variable} to denote a hybrid variable that are partly discrete, partly continuous.%
\footnote{When I started to write this manuscript, my preferred name for these hybrid variables was ``concrete''. However, \citet{maddison2016concrete} coined that name first with a different meaning (from ``\ul{con}tinuous relaxations of dis\ul{crete} variables'') in the context of their proposed ``concrete distribution'', also known as the Gumbel-softmax. Therefore, I am using the name ``\concrete'' to avoid any possible confusion.} %
In this draft, we will take a first step into a rigorous study of \concrete variables and their properties. 
We will call communication through \concrete variables \textbf{sparse communication}. The goal of sparse communication is to retain the advantages of differentiable computation but still being able to represent and approximate discrete symbols. 

The main contributions are:
\begin{itemize}
\item We represent existing transformations and densities on the probability simplex in terms of its \textbf{face lattice}. This lattice provides a combinatorial characterization of the simplex faces and opens the door to characterize sparse mixed distributions (\S\ref{sec:simplex}).
\item Based on the face lattice, we provide a \textbf{direct sum measure} as an alternative to the Lebesgue measure (used for continuous random variables) and the counting measure (used for discrete random variables). The direct sum measure takes all faces into account when expressing density functions, avoiding the need for products of Dirac densities when expressing densities with point masses in the boundary of the simplex.
\item We use the direct sum measure to formally define \textbf{\concrete random variables} and to propose a new \textbf{direct sum entropy}, which decomposes as a sum of discrete and continuous (differential) entropies. We show that the direct sum entropy has an interpretation in terms of optimal code length to encode a point in the simplex, and we provide an expression for the maximum entropy written as a generalized Laguerre polynomial (\S\ref{sec:information_theory}).
\item We define \textbf{\concrete strings} and \textbf{\concrete languages}, which are composed of hybrids of discrete and continuous symbols, and \textbf{\concrete (weighted) finite state automata}. We define a class of \concrete regular languages which contains (discrete) regular languages, and we derive closure properties for this class (\S\ref{sec:concrete_languages}).
\end{itemize}

\subsection{Why sparse communication?}\label{sec:why}

We list below  problems and applications where sparse communication might be desirable.

\paragraph{Explainability.} With the increasing interaction between humans and AI systems, there is a need for trust and transparency. For example, the EU introduced a ``right to explanation'' in General Data Protection Right (GDPR), where humans are entitled to receive explanations for justifying decisions taken by a machine. There is a tremendous body of work in ``explainable AI'' \citep[\textit{inter alia}]{doshi2017towards,miller2019explanation,chari2020directions}, either focusing on obtaining human readable interpretations of black-box algorithms \citep{ribeiro2016should}, or by designing systems that are interpretable by construction \citep{rudin2019stop}. 
Some proposals advocate ``neuro-symbolic integration'', in which symbolic-based expert systems evolve into hybrid systems that employ both statistical and logical reasoning techniques \citep{mooney1989experimental,bader2005dimensions}, emphasizing the need for integrating discrete and continuous representations in a shared workspace. 
One avenue of research includes rationalizers \citep{lei2016rationalizing}, where a component generates a rationale that influences the final prediction and also serves as an explanation to the end user. This rationale is typically a discrete sequence (such as a bit vector indicating relevant words in a document), constrained to be informative and short. Its discreteness precludes the backpropagation of gradients, posing difficulties to train the system end-to-end. Existing approaches rely on score function estimation \citep{lei2016rationalizing}, continuous relaxations combined with the reparametrization trick \citep{bastings2019interpretable}, and sparse relaxations \citep{treviso2020explanation}. 

\paragraph{Emergent communication.} Fostered by the success of neural networks, there has been a recent surge of interest in understanding emergent communication between artificial agents -- which communication protocols do agents develop when they have to cooperate to perform a task? What happens under channel capacity constraints? Understanding the conditions under which language evolves in communities of artificial agents and which features emerge may shed light on human language evolution and may help improving machine-machine and human-machine communication in the long run \citep{lazaridou2020emergent}. 
Multi-agent communication can be continuous, where messages consist of continuous vectors, or discrete, where they are fixed or variable-size sequences of symbols  \citep{foerster2016learning}. 
Continuous communication channels can  be regarded as
additional differentiable layers in a larger architecture encompassing the agents' networks. 
Discrete communication, on the other hand, prevents backpropagating the gradients through the discrete symbols, and training these networks is typically done with reinforcement learning \citep{foerster2016learning}
or continuous relaxations \citep{havrylov2017emergence}. 
It is hypothesized that this ``discrete bottleneck'', by preventing agents from accessing each others' states, forces the emergence of symbolic protocols. 
A related idea, inspired by the Global Workspace Theory from cognitive neuroscience \citep{baars1993cognitive} posits that generalization can emerge through an architecture of neural modules if their training encourages them to communicate effectively via the bottleneck of a shared global workspace \citep{goyal2021coordination}. 
Sparse communication with mixed symbols, the topic studied in our paper, could potentially provide a third type of multi-agent communication (or communication among modules) in-between the continuous and discrete cases, leading to the emergence of  new protocols.

\paragraph{Neural memories, attention mechanisms, and reasoning.} Many models have been developed to couple neural networks with external memory resources, including attention mechanisms \citep{bahdanau2014neural}, memory networks \citep{sukhbaatar2015end}, neural Turing machines and differentiable computers \citep{graves2014neural, graves2016hybrid}, continuous relaxations of data structures \citep{grefenstette2015learning}, and differentiable interpreters and theorem provers \citep{bovsnjak2017programming,rocktaschel2017end}. 
These models have been proposed to enable neural networks to perform tasks which require some form of complex reasoning involving discrete structures, and they all ``relax'' symbolic manipulation by building continuous (and differentiable) counterparts of these structures, sometimes sparse \citep{Martins2016ICML,fusedmax,niculae2018sparsemap,peters2019sparse}. 
It is likely that sparse memories and attention mechanisms will play an important role to develop better inductive biases for deep learning of higher-level cognition \citep{goyal2020inductive}. 

\paragraph{Discrete latent variable models.} 
Discrete latent variable models are appealing to facilitate learning with less supervision, to leverage prior knowledge via structural bias (\textit{e.g.} when a particular discrete structure is used), and to build more compact and more interpretable models. 
A challenge with discrete latent variable models such as variational auto-encoders \citep{kingma2013auto,kingma2019introduction} is to differentiate through the latent variables, which may involve computing a large or combinatorial expectation. Existing strategies include the score function estimator (also called \textsc{reinforce}, \citealt{williams1992simple}) combined with strategies for variance reduction \citep{mnih2014neural}, the pathwise gradient estimation (reparametrization trick) combined with a continuous relaxation of the latent variables (such as the Gumbel-softmax distribution, \citealt{jang2016categorical,maddison2016concrete}), and a parametrization of the latent variable that enables sparse expectations \citep{correia2020efficient}. This results in continuous approximations of quantities that are inherently discrete, sometimes creating a discrete-continuous hybrid \citep{louizos2018learning}.


\section{The Probability Simplex and All Its Faces}\label{sec:simplex}

We assume throughout an alphabet with $K \ge 2$ symbols indexed by integers $[K] = \{1, \ldots, K\}$. 
It is common to use a one-hot vector representation to indicate a symbol in this alphabet, \textit{i.e.}, the $k\textsuperscript{th}$ symbol corresponds to the vector
\begin{equation}
\bm{e}_k := [0, \ldots, 0, \underbrace{1}_{\text{$k$\textsuperscript{th} entry}}, 0, \ldots, 0].
\end{equation} 
We denote by $\mathbb{R}^K$ the $K$-dimensional Euclidean space, and by $\triangle_{K-1} \subseteq \mathbb{R}^K$ the \textbf{probability simplex}, $\triangle_{K-1} := \{\bm{y} \in \mathbb{R}^K \mid \bm{y}\ge \mathbf{0}, \,\, \mathbf{1}^\top \bm{y} = 1 \}$, whose vertices are the $K$ one-hot vectors above. Each point $\bm{y} \in \triangle_{K-1}$ can be regarded as a vector of probabilities for the $K$ symbols, parametrizing a categorical distribution over $[K]$.%
\footnote{With some abuse of notation, in the sequel we sometimes identify $\bm{y}$, which parametrizes this categorical distribution $\mathrm{Cat}(\bm{y})$, with the categorical distribution itself.} %
The \textbf{support} of $\bm{y} \in \triangle_{K-1}$ is the set of symbols with nonzero probability, $\mathrm{supp}(\bm{y}) := \{k \in [K] \mid y_k > 0\}$. 
The set of categorical distributions with full support corresponds to the \textbf{relative interior} of the simplex, $\mathrm{ri}(\triangle_{K-1}) := \{\bm{y} \in \triangle_{K-1} \mid \mathrm{supp}(\bm{y}) = [K]\}$. A categorical distribution parametrized by $\bm{y} \in \triangle_{K-1} \setminus \mathrm{ri}(\triangle_{K-1})$ (\textit{i.e.}, in the boundary of the simplex) is called a \textbf{sparse} distribution. 

\subsection{Transformations from $\mathbb{R}^K$ to $\triangle_{K-1}$}\label{sec:transforms} 

In many situations, there is a need to convert a vector of real numbers $\bm{z} \in \mathbb{R}^K$ (scores for the several symbols, often called \textit{logits}) into a categorical distribution $\bm{y} \in \triangle_{K-1}$. This arises in multinomial logistic regression, in models with discrete latent variables, in the last layer of neural networks for multi-class classification, in attention mechanisms, and in reinforcement learning, when choosing the next action to perform. We next review the most common strategies to do this. 

\paragraph{Softmax.} 
The most popular choice is by far the softmax transformation \citep{bridle1990probabilistic}: $$\bm{y} = \mathrm{softmax}(\bm{z}) := \frac{\exp(\bm{z})}{\mathbf{1}^\top \exp(\bm{z})}.$$ Since the exponential function is strictly positive, 
it follows that the softmax transformation reaches only the relative interior $\mathrm{ri}(\triangle_{K-1})$, that is, it never returns sparse distributions.%
\footnote{The softmax transformation can be regarded as a regularized argmax problem over the simplex, using the Shannon entropy as the regularizer. The inability of reaching boundary points is due to a property of the Shannon entropy called ``essential smoothness.'' This property is relaxed with $\alpha$-entmax for $\alpha>1$, which uses a generalized entropy \citep{Tsallis1988}. See \citet{wainwright_2008} and \citet{blondel2020learning} for details.} %
To encourage more peaked distributions (but never sparse) it is common to add a \textbf{temperature} parameter $\beta > 0$, by defining $\mathrm{softmax}_\beta(\bm{z}) := \mathrm{softmax}(\beta^{-1}\bm{z})$. The limit case $\beta \rightarrow 0_+$ corresponds to the indicator vector for the \textbf{argmax}, which returns a one-hot distribution indicating the symbol with the largest score.%
\footnote{When there are ties, argmax returns a uniform distribution supported on the highest-scored symbols.} 
While the softmax transformation is differentiable (hence permitting end-to-end training with the gradient backpropagation algorithm), the argmax indicator function has zero gradients almost everywhere. With very small temperatures, it is common to incur numerical problems and slow training.

\paragraph{Top-$k$ softmax.}  When a sparse distribution is desired, one possible approach is to remove the tail, keeping only the $k\textsuperscript{th}$ largest scores, and to perform the softmax transformation on those. The value of $k$ can be fixed \citep{fan2018hierarchical,radford2019language} or it can be chosen based on a given percentile \citep{holtzman2019curious}. This top-$k$ softmax operation is still differentiable almost everywhere for $k>1$, with nonzero gradients for the $k$ largest coordinates of $\bm{z}$. This transformation has been used in several settings, for example in recurrent independent mechanisms \citep{goyal2019recurrent}. 

\begin{figure}[t]
\begin{center}
\input{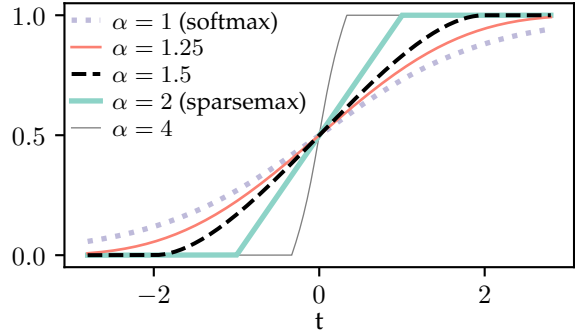}
\end{center}
\caption{\label{fig:entmax} Illustration of
entmax in the two-dimensional case $\alpha$-$\mathrm{entmax}([t, 0])_1$.
Softmax corresponds to $\alpha=1$ and sparsemax to $\alpha=2$. 
All transformations with $\alpha>1$ saturate at $t=\pm\nicefrac{1}{\alpha-1}$. 
Taken from \citet{peters2019sparse}.}
\end{figure}

\paragraph{Sparsemax and $\alpha$-entmax.} 
A more direct approach to achieve sparse distributions is \textit{sparsemax} \citep{Martins2016ICML}, the Euclidean projection onto the simplex, $$\mathrm{sparsemax}(\bm{z}) := \arg\min_{\bm{y} \in \triangle_{K-1}} \|\bm{y} - \bm{z}\|.$$  Unlike softmax, sparsemax reaches the \textit{full} simplex $\triangle_{K-1}$, including the boundary. With $K=2$ and fixing $z_2 = 0$ (without loss of generality), sparsemax becomes a ``hard sigmoid'' (see Figure~\ref{fig:entmax}). 
More generally, \textit{entmax} \citep{peters2019sparse} is a family of transformations parametrized by $\alpha\ge 0$, $$\text{$\alpha$-$\mathrm{entmax}$}(\bm{z}) := [1 + (\alpha-1)(\bm{z} - \tau \mathbf{1})]_+^{1/(\alpha-1)}.$$ This family recovers softmax as a limit case when $\alpha \rightarrow 1$ and sparsemax when $\alpha=2$. 
It is shown by \citet{blondel2020learning} that, for $\alpha > 1$, entmax also reaches the full simplex -- it can also return sparse distributions, and the coefficient $\alpha$ controls the propensity for sparsity. Entmax (and its particular case sparsemax) is differentiable almost everywhere, permitting efficient training. Sparsemax and entmax have been used as a component of neural networks to tackle several natural language processing tasks \citep{peters2019sparse,correia2019adaptively,martins2020sparse}. 


\subsection{Densities over the simplex}\label{sec:densities}


In \S\ref{sec:transforms}, we presented deterministic maps from $\bm{z} \in \mathbb{R}^K$ into $\bm{y} \in \triangle_{K-1}$. 
Let us now consider \textit{stochastic} maps. Denote by $Y$ a random variable taking values in the simplex $\triangle_{K-1}$ with probability density function $p_Y(\bm{y})$. We will see again that some transformations are limited to $\mathrm{ri}(\triangle_{K-1})$, while others are not.

\paragraph{Dirichlet.} 
The Dirichlet is a multivariate generalization of the Beta distribution, being the conjugate prior of the categorical distribution. It is widely used in topic models \citep{blei2003latent}. 
The density of a Dirichlet random variable $Y \sim \mathrm{Dir}(\bm{\alpha})$, with $\bm{\alpha} = [\alpha_1, \ldots, \alpha_K] > \mathbf{0}$, is
\begin{equation}\label{eq:dirichlet}
p_Y(\bm{y}; \bm{\alpha}) = \frac{1}{B(\bm{\alpha})}\prod_{k=1}^K y_k^{\alpha_k-1}, \quad \text{where $B(\bm{\alpha}) = \frac{\prod_{k=1}^K \Gamma(\alpha_k)}{\Gamma(\sum_{k=1}^K \alpha_k)}$}.
\end{equation}
When $\bm{\alpha} = \mathbf{1}$, this becomes a uniform distribution, with the constant density value $p_Y(\bm{y}; \mathbf{1}) = (K-1)!$.
Although a Dirichlet can assign high probability density to values of $\bm{y}$ close to the boundary of the simplex when $\bm{\alpha} < \mathbf{1}$, it is supported in $\mathrm{ri}(\triangle_{K-1})$ -- sampling from a Dirichlet will \textit{never} result in a $\bm{y}$ parametrizing a sparse categorical distribution. 

\paragraph{Logistic Normal.}
The logistic normal distribution \citep{atchison1980logistic} is an alternative to the Dirichlet which can capture correlations between labels \citep{blei2007correlated}. It can be described as the following generative story:%
\footnote{A minimal parametrization for the logistic normal considers $K-1$ Gaussian random variables only, followed by a softmax transformation where one the ``logits'' is fixed to zero, leading to a closed form density. We present here an overcomplete parametrization for simplicity.} %
\begin{align}
N &\sim \mathrm{Normal}(\mathbf{0}, \mathbf{I})\nonumber\\
Y &= \mathrm{softmax}(\bm{z} + \Sigma^{1/2}N). 
\end{align}
In words, $Y$ is the softmax transformation of a multivariate Gaussian random variable with mean $\bm{z}$ and covariance $\Sigma$. 
Again, since the softmax transformation is strictly positive, the logistic normal places no probability mass to points in the boundary of the simplex. 
By analogy to other distributions to be presented next, it is fair to call the logistic normal ``Gaussian-softmax.''

\paragraph{Gumbel-softmax.}
%
The Gumbel-softmax distribution \citep{jang2016categorical,maddison2016concrete}, also called \textit{concrete distribution},%
\footnote{It is worth noting that some of the names given to these distributions are inspired by the sampling procedure (Gaussian-softmax, Gumbel-softmax), while others are focused on the distribution and agnostic to the sampling (logistic normal, concrete). We will come back to different ways of sampling in the sequel.} %
has the following generative story:
\begin{align}
U_k &\sim \mathrm{Uniform}(0, 1), \quad k \in [K]\nonumber\\
G_k &= -\log(-\log(U_k)), \quad k \in [K]\nonumber\\
Y &= \mathrm{softmax}_\beta(\bm{z} + G). 
\end{align}
The name stems from the fact that the $G_k$ generated this way has a Gumbel distribution \citep{gumbel1935valeurs}. 
When the temperature $\beta$ approaches zero, the softmax approaches the indicator for argmax and $Y$ becomes closer to a discrete categorical random variable -- this reparametrization of a categorical distribution is known as the \textit{Gumbel-max} trick \citep{luce1959individual,papandreou2011perturb}. Thus, the Gumbel-softmax distribution can be seen as a \textit{continuous} relaxation of a categorical. Sampling from a Gumbel-softmax produces a point in the simplex $\triangle_{K-1}$. However, for positive $\beta$, this point will be in $\mathrm{ri}(\triangle_{K-1})$ with probability 1 -- like the previous cases, there is no probability mass assigned to the boundary. 
The following is an explicit density for the Gumbel-softmax distribution:
\begin{equation}\label{eq:concrete}
p_Y(\bm{y}; \bm{z}, \beta) = (K-1)! \,\, \beta^{K-1}\left( \sum_{k=1}^{K} \frac{\pi_k}{y_k^{\beta}} \right)^{-K}\prod_{k=1}^{K} \frac{\pi_k}{y_k^{\beta+1}}, \quad \text{with $\bm{\pi} = \mathrm{softmax}(\bm{z})$}.
\end{equation}
For $K=2$, by using the fact that the difference of two Gumbel random variables $G_1$ and $G_2$ is a logistic random variable $L=G_1-G_2$ \citep[Appendix B]{maddison2016concrete}, the generative story can be written as
\begin{align}
U &\sim \mathrm{Uniform}(0, 1)\\
L &= \log U - \log(1-U)\\
Y &= \mathrm{sigmoid}_\beta(z + L),
\end{align}
where $\mathrm{sigmoid}_\beta(z) = \frac{1}{1+\exp(-\beta^{-1}z)}$.

\paragraph{Binary hard concrete and HardKuma.} For $K=2$, a point in the simplex can be represented as $\bm{y} = (y, 1-y)$ and the simplex is isomorphic to the unit interval, $\triangle_1 \simeq [0,1]$. For this binary case, \citet{louizos2018learning} proposed a \textit{hard concrete distribution} which stretches the Gumbel-softmax \eqref{eq:concrete} and applies a hard sigmoid transformation (which equals the sparsemax with $K=2$) as a way of placing a point mass in the points $0$ and $1$. 
A more general case with $K\ge 2$ (never proposed before, to the best of our knowledge) would correspond to the generative story
\begin{align}
Y' &\sim \mathrm{GumbelSoftmax}(\bm{z}, \beta)\\
Y &= \mathrm{sparsemax}(\lambda Y'), \quad \text{with $\lambda \ge 1$}. 
\end{align}
A similar strategy, for $K=2$, underlies the HardKuma distribution \citep{bastings2019interpretable}, which instead of the Gumbel-softmax uses the Kumaraswamy distribution \citep{kumaraswamy1980generalized}.
These ``stretch-and-rectify'' techniques enable assigning probability mass to the boundary of $\triangle_1$, as shown in Figure~\ref{fig:rectify} (left). 
These techniques are similar in spirit to the spike-and-slab procedure for feature selection \citep{mitchell1988bayesian,ishwaran2005spike}. 

\begin{figure}[t]
\begin{center}
\includegraphics[width=.48\columnwidth]{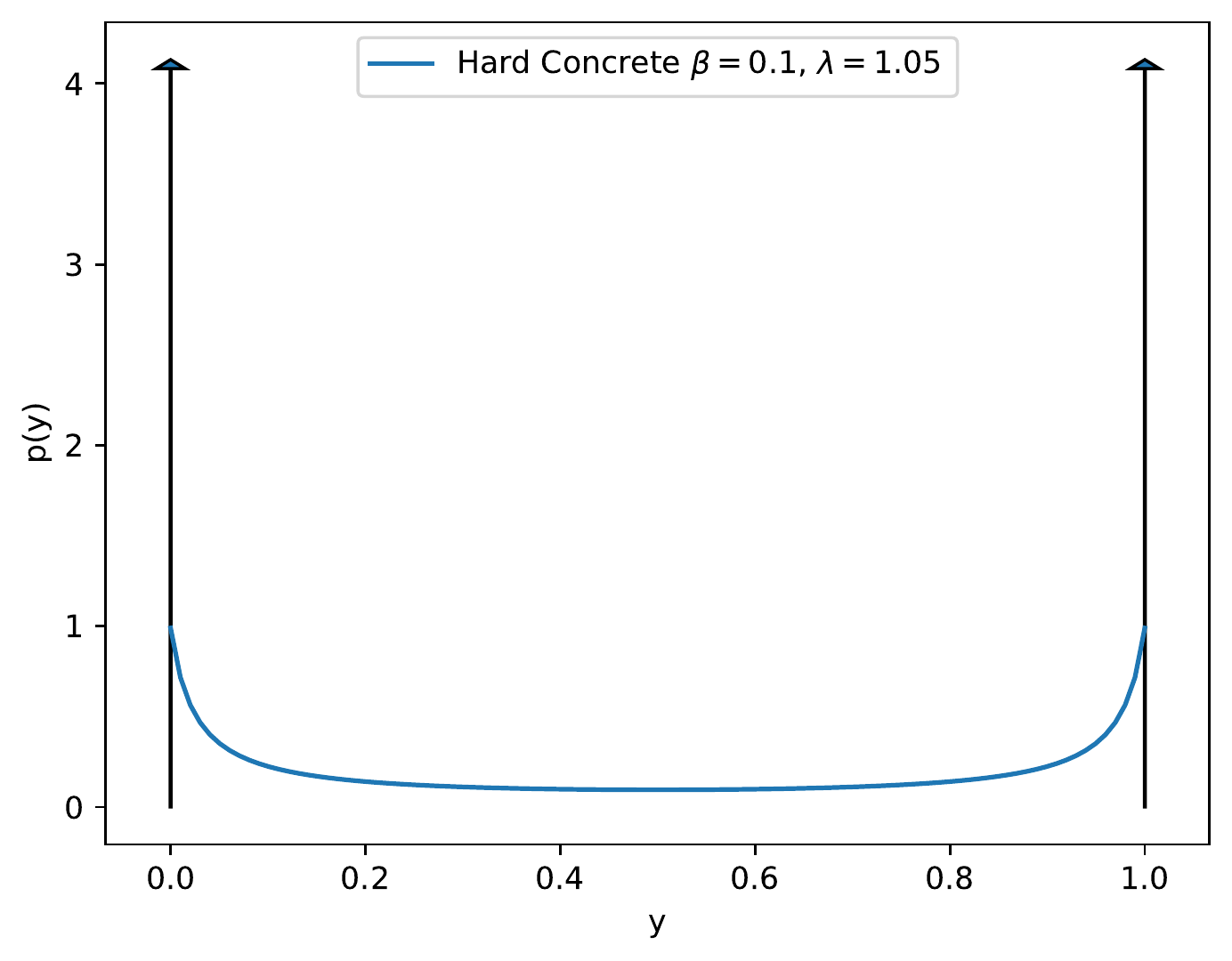}\,\,
\includegraphics[width=.48\columnwidth]{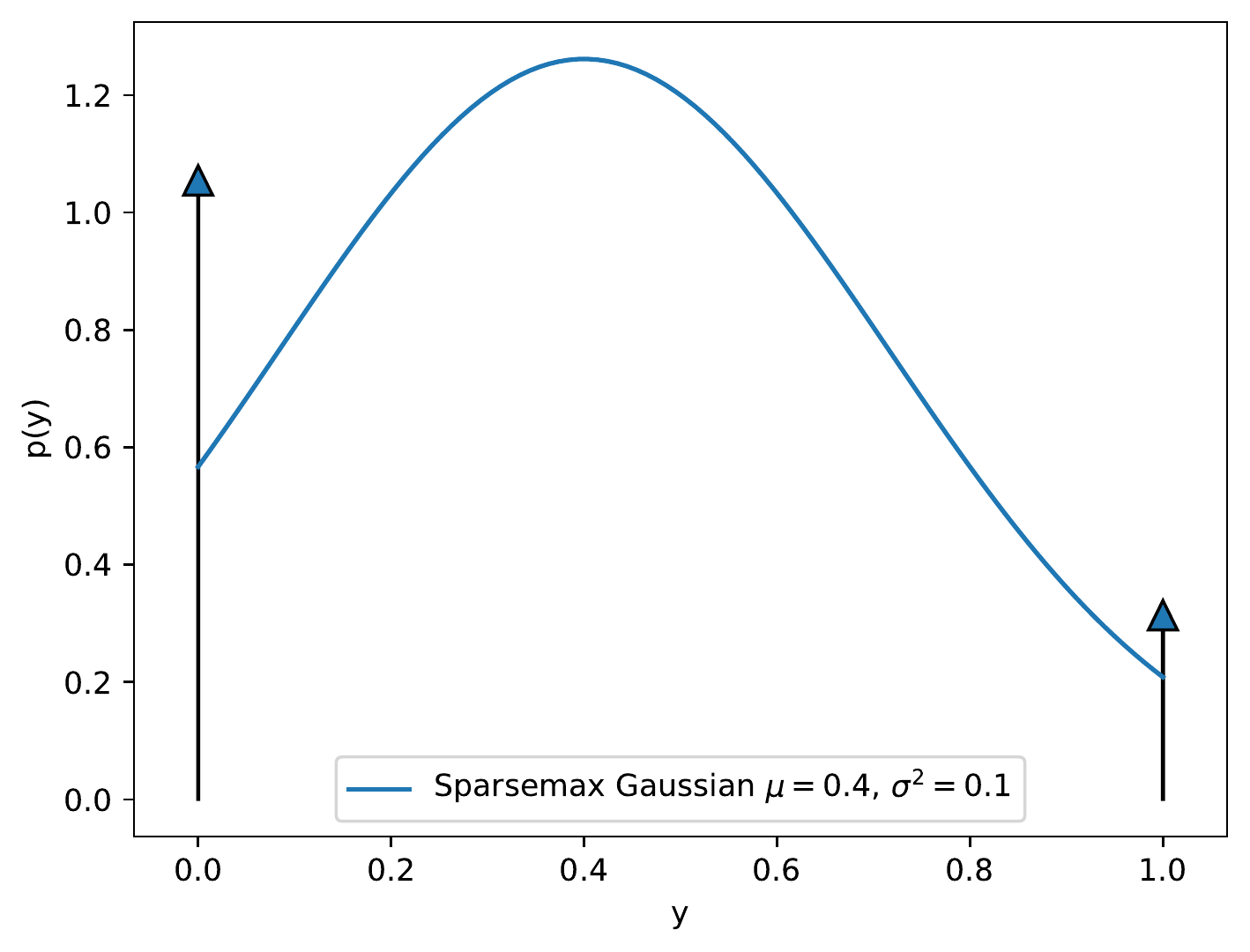}
\end{center}
\caption{\label{fig:rectify} \Concrete densities on $\triangle_1$, represented with Dirac deltas. Left: Hard Concrete with $\lambda=1.05$ and temperature $\beta=0.1$. Right: Gaussian sparsemax. The length of the Dirac arrows is 10 times the corresponding discrete probability value.}
\end{figure}

\paragraph{Gaussian-sparsemax.} It is possible to use the same rectification idea (but without any stretching required) to obtain a sparsemax counterpart of the logistic normal, which we call ``Gaussian-sparsemax.'' This has the following generative story:
\begin{align}
N &\sim \mathrm{Normal}(\mathbf{0}, \mathbf{I})\\
Y &= \mathrm{sparsemax}(\bm{z} + \Sigma^{1/2}N). 
\end{align}
Unlike the logistic normal, the Gaussian-sparsemax can assign non-zero probability mass to the boundary of the simplex (since the sparsemax transformation can lead to a sparse distribution). 
When $K=2$, writing $\bm{y} = (y, 1-y)$ and $\bm{z} = (z, 1-z)$, this distribution has the following density:
\begin{equation}\label{eq:gaussian_sparsemax}
p_Y(y) = \mathcal{N}(y; z, \sigma^2)  + \frac{1-\mathrm{erf}(z/(\sqrt{2}\sigma))}{2}\delta_0(y) + \frac{1+\mathrm{erf}((z-1)/(\sqrt{2}\sigma))}{2}\delta_1(y),
\end{equation}
where $\delta_s(y)$ is a Dirac delta density. This is illustrated in Figure~\ref{fig:rectify} (right).
For $K>2$, a density expression with Diracs would be cumbersome, since it would require a combinatorial number of Diracs of several ``orders,'' depending on whether they are placed at a vertex, edge, face, etc. 
Another annoyance is that Dirac deltas have $-\infty$ differential entropy, which prevents information theoretic treatment of these random variables. 
The next subsection shows how we can obtain densities that assign mass to the full simplex while avoiding Diracs, by making use of the face lattice and defining a new measure.

\paragraph{Extensions to the structured case.} 
Several of the approaches listed above have been extended more broadly to define distributions over structured and combinatorial variables, such as binary vectors representing trees and matchings. For example, a structured counterpart of sparsemax has been proposed by \citet{niculae2018sparsemap} under the name SparseMAP, and structured variants of Gumbel-like stochastic perturbation methods have been proposed by \citet{corro2018differentiable,berthet2020learning,paulus2020gradient}. 
Strategies for exploiting sparsity in combinatorial latent variables have also been considered by \citet{correia2020efficient}.

\subsection{Faces of the probability simplex}\label{sec:faces}

We next study the probability simplex in more detail, whose elements represent categorical distributions over $K$ elements.%
\footnote{Several discrete latent variable models consider configurations for the latent variables other than categorical distributions, for example products of independent Bernoulli random variables or structured latent variables which may have global constraints or higher-order parameters to model their interactions. We choose to study the probability simplex since it subsumes, for $K=2$, the Bernoulli case, and it is applicable to categorical distributions. It is possible to generalize  the concepts presented here, such as the face lattice, to arbitrary polytopes, which offers a natural way of extending this study to structured latent variables by using their marginal polytopes in place of the probability simplex \citep{wainwright_2008}.} %

The probability simplex $\triangle_{K-1}$ is an example of a \textbf{convex polytope} (\textit{i.e.}, a polyhedral set which is convex and bounded). As described by \citet[Lecture~2, \S2.2]{ziegler1995lectures} and \citep[\S3.2]{grunbaum2003convex}, the combinatorial structure of a polytope is determined by its {\bf face lattice}, which we now describe. 
Each \textbf{face} of the simplex $\triangle_{K-1}$ is induced by an index set $\mathcal{I} \subseteq [K]$. 
Given $\mathcal{I}$, the corresponding face $f_{\mathcal{I}}$ is the set
\begin{equation}
f_\mathcal{I} = \left\{\bm{p} \in \triangle_{K-1} \mid \mathrm{supp}(\bm{p}) \subseteq \mathcal{I}\right\}.
\end{equation}
This is the set of distributions assigning zero probability mass outside $\mathcal{I}$. 
We denote by $\mathcal{F}(\triangle_{K-1} )$ the set of \textit{all faces} of $\triangle_{K-1}$, which is isomorphic to the power set of $[K]$ (\textit{i.e.,} $\mathcal{F}(\triangle_{K-1} ) \simeq 2^{[K]}$). Note that  $\varnothing \in \mathcal{F}(\triangle_{K-1} )$; we denote by $\bar{\mathcal{F}}(\triangle_{K-1}) := \mathcal{F}(\triangle_{K-1}) \setminus \{\varnothing\}$ the set of faces excluding the empty set, and 
by $\mathrm{dim}(f_\mathcal{I}) := |\mathcal{I}| - 1$ the \textbf{dimension} of a face $f_\mathcal{I} \in \bar{\mathcal{F}}(\triangle_{K-1})$. Thus, 
vertices are $0$-dimensional faces, and the simplex $\triangle_{K-1}$ itself is a $(K-1)$-dimensional face, called the ``maximal face''. Any $k$-dimensional face can be regarded as a ``smaller'' simplex, \textit{i.e.}, $f_\mathcal{I} \simeq \triangle_{k-1}$ with $k = |\mathcal{I}|$. 

The set $\mathcal{F}(\triangle_{K-1} )$ has a partial order induced by set inclusion ($f_\mathcal{I} \subseteq f_\mathcal{J}$ iff $\mathcal{I} \subseteq \mathcal{J}$), that is, it is a partially ordered set (poset); more specifically it is a \textit{lattice}, hence the name \textit{face lattice}.
The full simplex $\triangle_{K-1}$ can be decomposed uniquely as the \textbf{disjoint union} of the relative interior of its faces:
\begin{equation}\label{eq:disjoint_union_faces}
\triangle_{K-1} = \bigsqcup_{f \in \bar{\mathcal{F}}(\triangle_{K-1})} \mathrm{ri}(f). 
\end{equation}
For example, the simplex $\triangle_2$ is composed of its face $\mathrm{ri}(\triangle_2)$ (\textit{i.e.}, excluding the boundary), three edges (excluding the vertices in the corners), and three vertices (the corners). This is represented schematically in Figure~\ref{fig:simplex_decomposition}. 
The partition \eqref{eq:disjoint_union_faces} implies that any subset $A \subseteq \triangle_{K-1}$ can be represented as a tuple 
$A = (A_f)_{f\in \bar{\mathcal{F}}(\triangle_{K-1})}$, where $A_f = A \cap \mathrm{ri}(f)$; and the sets $A_f$ are all disjoint. 

\begin{figure}[t]
\begin{center}
\includegraphics[width=1\columnwidth]{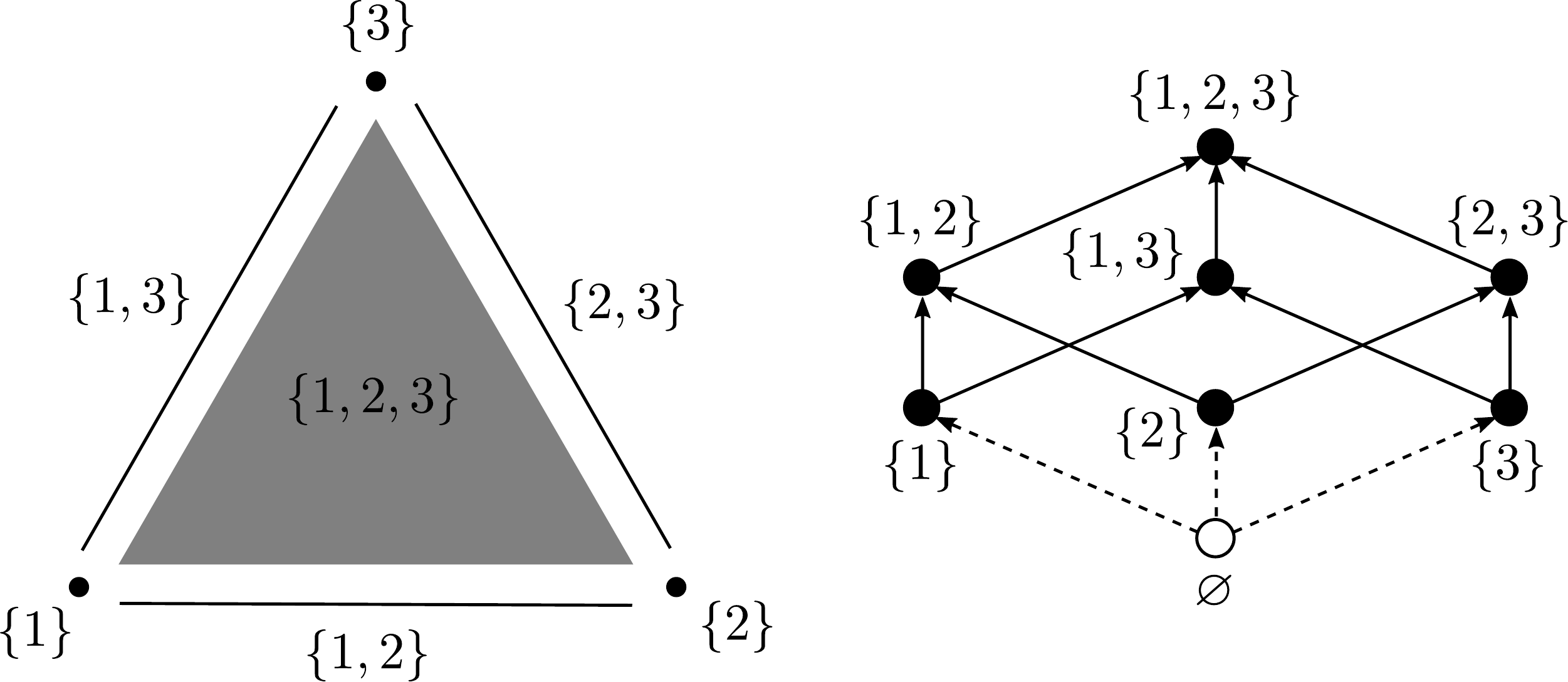}
\caption{Left: Decomposition of a simplex as the disjoint union of the relative interior of its faces. Right: Hasse diagram of the face lattice.\label{fig:simplex_decomposition}}
\label{default}
\end{center}
\end{figure}

\paragraph{Densities over the \textit{full} simplex.} 
In \S\ref{sec:densities}, we saw several distributions on the simplex $\triangle_{K-1}$. However, most of them (all but the hard concrete, HardKuma, and Gaussian sparsemax) assign zero probability to all faces but $\mathrm{ri}(\triangle_{K-1})$, that is, $\mathrm{Pr}\{\bm{y} \in f_\mathcal{I}\} = \int_{f_\mathcal{I}} p_Y(\bm{y}) = 0$ for any $\mathcal{I} \ne [K]$. In fact, any proper density (without Diracs) has this limitation: This is because the non-maximal faces (i.e., all faces except $\mathrm{ri}(\triangle_{K-1})$) have zero Lebesgue measure in $\mathbb{R}^{K-1}$. It is possible to circumvent this by defining densities that contain products of Dirac functions (as shown above for the case $K=2$). However, there is a more elegant construction that does not require generalized functions, as we shall see. 

The key trick is to replace the Lebesgue measure by a different measure that takes all simplex faces into account. 
\begin{definition}[Direct sum measure]\label{def:direct_sum}
We define the direct sum measure on $\triangle_{K-1}$ as
\begin{equation}\label{eq:direct_sum_measure}
\mu^\oplus(A) = \sum_{f \in \bar{\mathcal{F}}(\triangle_{K-1})} \mu_f(A \cap \mathrm{ri}(f)),
\end{equation}
where $\mu_f$ is the $\mathrm{dim}(f)$-dimensional Lebesgue measure for  $\mathrm{dim}(f)>0$, and the counting measure for $\mathrm{dim}(f) = 0$.
\end{definition}

We show in Appendix~\ref{sec:measure} that $\mu^{\oplus}$ is a valid measure on $\triangle_{K-1}$ under the product $\sigma$-algebra of its faces. 
We can then define a probability density $p_Y^\oplus(\bm{y})$ with respect to this measure and use it to compute probabilities of measurable subsets of $\triangle_{K-1}$. 
Such probability distribution can equivalently be defined as follows:
\begin{enumerate}
\item Define a probability mass function $P_F(f)$ on $\bar{\mathcal{F}}(\triangle_{K-1}) \simeq 2^{[K]} \setminus \{\varnothing\}$ (using the counting measure);
\item For each face $f \in \bar{\mathcal{F}}(\triangle_{K-1})$, define a probability density $p_Y(\bm{y} \mid f)$ over $\mathrm{ri}(f)$ (using the Lebesgue measure in $\mathbb{R}^{\mathrm{dim}(f)}$).
\end{enumerate}
Thus, random variables with a distribution of this form have a discrete part ($P_F(f)$) and a continuous part ($p_Y(\bm{y} \mid f)$); we call them \textbf{\concrete random variables}.%
\footnote{This direct sum measure and the two-step procedure above is related to the concept of ``manifold stratification'' and sampling procedures proposed in the statistical physics literature \citep{holmes2020simulating}. A similar formulation has also been recently considered (for a few special cases) by \citet{murady2020probabilistic} and \citet{vanderwel2020improving}.} %
This is formalized in the following definition. 

\begin{definition}[\concrete random variable] \label{def:concrete_rv}
Let $Y$ denote a  random variable over points in the simplex (including the boundary) and $F$ a discrete random variable over faces. Since the mapping from $Y$ to its face $F$ is deterministic, we have $p^\oplus_Y(\bm{y}) = P_F(f) p_{Y\mid F}(\bm{y} \mid f)$ for $\bm{y} \in \mathrm{ri}(f)$. 
The probability of a set $A \subseteq \triangle_{K-1}$ is given by:
\begin{equation}\label{eq:prob_concrete}
\mathrm{Pr}\{\bm{y} \in A\} = \int_{A} p_Y^\oplus(\bm{y}) d\mu^{\oplus} = \sum_{f \in \bar{\mathcal{F}}(\triangle_{K-1})} P_F(f) \int_{A_f} p_{Y\mid F}(\bm{y} \mid f),
\end{equation}
where $A_f = A \cap \mathrm{ri}(F)$.
\end{definition}
The expression \eqref{eq:prob_concrete} may be regarded as a manifestation of the law of total probability mixing discrete and continuous variables. 
Using this expression, we can for example write the expectation of a \concrete random variable as 
\begin{equation}
\mathbb{E}_{p_Y^\oplus}[Y] = \mathbb{E}_{P_F}\left[\mathbb{E}_{p_{Y\mid F}}[Y\mid F=f]\right].
\end{equation}

Note that both discrete and continuous distributions are recovered with our definition: If $P_F(f) = 0$ for $\mathrm{dim}(f) > 0$, we have a discrete categorical distribution, which only assigns probability to the 0-faces, \textit{i.e.}, the vertices. On the other extreme, if  $P_F(f) = 1$ for $\mathrm{dim}(f) = K-1$ (that is, if $f = \triangle_{K-1}$), and $0$ otherwise, we have a continuous distribution confined to $\mathrm{ri}(\triangle_{K-1})$. 
That is, \textbf{\concrete random variables include purely discrete and purely continuous random variables as particular cases.}
Of course, to parametrize distributions of high-dimensional mixed random variables, it is not efficient to consider all degrees of freedom suggested in Definition~\ref{def:concrete_rv}, since there are $2^{K}-1$ many faces, excluding the empty set. Instead, it seems a reasonable idea to derive parametrizations that exploit the lattice structure of the faces.

\paragraph{Example: Gaussian sparsemax.} Let us revisit the 2D Gaussian sparsemax example from \eqref{eq:gaussian_sparsemax}. For $K=2$, using Dirac deltas, the density with respect to the Lebesgue measure in $\mathbb{R}$ has the form
\begin{equation}
p_Y(y) = \left\{
\begin{array}{ll}
P_0\delta_{0}(y) + P_1\delta_{1}(y) + \mathcal{N}(y; z, \sigma^2), & \text{if $y \in [0,1]$} \\
0, & \text{otherwise,}
\end{array}
\right.
\end{equation}
with $P_0 = \frac{1-\mathrm{erf}(z/(\sqrt{2}\sigma))}{2}$ and $P_1 = \frac{1+\mathrm{erf}((z-1)/(\sqrt{2}\sigma))}{2}$. 
The same distribution can be expressed via the density $p_Y^\oplus(y) = P_F(f) p_{Y\mid F}(\bm{y}\mid f)$ as 
\begin{gather}
P_F(\{0\}) = P_0, \quad P_F(\{1\}) = P_1, \quad P_F([0,1]) = 1-P_0-P_1, \nonumber\\
p_{Y\mid F}(y \mid F=[0,1])(y) \,\,=\,\, \frac{\mathcal{N}(y; z, \sigma^2)}{1 - P_0 - P_1}.
\end{gather}
For $K>2$, obtaining closed form expressions for $P_F$ and $p_{Y\mid F}$ seems considerably harder. It is possible to compute the face probability mass function $P_F$ by evaluating an integral using distributions of order statistics of the Gaussian \citep{vieira2021order,vieira2021smallest}.



\section{Information Theory for \Concrete Random Variables}\label{sec:information_theory}

Now that we have the tools to define probability distributions that take into account all the faces of the simplex without the need for Dirac deltas, we  proceed to defining the \textbf{entropy} of such distributions. 

The entropy of a random variable $X$ with respect to a measure $\mu$ is: 
\begin{equation}\label{eq:entropy}
H^{\mu}(X) = -\int_{\mathcal{X}} p_X(x) \log p_X(x) d\mu(x),
\end{equation}
where $p_X(x)$ is a probability density satisfying $\int_{\mathcal{X}} p_X(x) d\mu(x) = 1$. 
When $\mathcal{X}$ is finite and $\mu$ is the counting measure, the integral becomes a sum and we recover \textbf{Shannon's discrete entropy}, which is non-negative and upper bounded by $\log |\mathcal{X}|$, the entropy of the uniform distribution. 
When $\mathcal{X} \subseteq \mathbb{R}^k$ is continuous and $\mu$ is the Lebesgue measure, we recover the \textbf{differential entropy}, which can be negative and, for compact $\mathcal{X}$, is upper bounded by the logarithm of the volume of $\mathcal{X}$. The maximal value corresponds to a continuous uniform distribution on $\mathcal{X}$. 

For example, the differential entropy of a Dirichlet random variable $Y \sim \mathrm{Dir}(\bm{\alpha})$ (cf.~\eqref{eq:dirichlet}) is 
\begin{equation}
H(Y) = \log B(\bm{\alpha}) + (\alpha_0 - K) \psi(\alpha_0)  - \sum_{k=1}^K (\alpha_k-1)\psi(\alpha_k),
\end{equation}
where $\alpha_0 = \sum_{k}^K \alpha_k$ and $\psi$ is the digamma function. 
When $\bm{\alpha} = \mathbf{1}$, this becomes a flat (uniform) density and the entropy attains its maximum value:
\begin{equation}\label{eq:entropy_flat_dirichlet}
H(Y) = \log B(\bm{\alpha}) = -\log (K-1)!.
\end{equation}
This value is negative for $K>2$; it follows that the differential entropy of any distribution in the simplex is negative.

\paragraph{Direct sum entropy.} 
What happens if we plug in \eqref{eq:entropy} the direct sum measure \eqref{eq:direct_sum_measure}? 
Since $F$ depends deterministically on $Y$, we have $H(Y, F) = H(Y)$. 
We therefore define the direct sum entropy of a \concrete random variable as follows. 
\begin{definition}[Direct sum entropy]\label{def:direct_sum_entropy}
Let $Y$ be a \concrete random variable. The direct sum entropy of $Y$ is 
\begin{small}
\begin{align}\label{eq:entropy_simplex}
H^{\oplus}(Y) &:=  H(F) + H(Y \mid F) \\
&= \underbrace{-\!\!\sum_{f \in \bar{\mathcal{F}}(\triangle_{K-1})} \!\! P_F(f) \log P_F(f)}_{\text{discrete entropy}} 
+ \sum_{f \in \bar{\mathcal{F}}(\triangle_{K-1})} \!\! P_F(f) \underbrace{\left(-\int_{f} p_{Y\mid F}(\bm{y}\mid f) \log p_{Y\mid F}(\bm{y}\mid f)\right)}_{\text{differential entropy}}.\nonumber
\end{align}
\end{small}
\end{definition}
As shown in Definition~\ref{def:direct_sum_entropy}, the direct sum entropy has two components: a \textbf{discrete entropy over faces} and an \textbf{expectation of a differential entropy over each face.} 

\paragraph{Relation to optimal codes.} 
The discrete entropy of a random variable representing an alphabet symbol corresponds to the average length of the optimal code for the symbols in the alphabet, in a lossless compression setting. Besides, it is known \citep{cover2012elements} that the optimal number of bits to encode a $D$-dimensional continuous random variable with $N$ bits of precision equals its differential entropy (in bits) plus $ND$.%
\footnote{\citet[Theorem~9.3.1]{cover2012elements} provide an informal proof for $D=1$, but it is straightforward to extend the same argument for $D>1$.} %
Therefore, \textbf{the direct sum entropy \eqref{eq:entropy_simplex} has an interpretation in terms of code optimality: it is the average length of the optimal code where the sparsity pattern of $\bm{y} \in \triangle_{K-1}$ must be encoded losslessly and where there is a predefined bit precision for the fractional entries of $\bm{y}$.} This is formalized as the following
\begin{proposition}
Let $Y \sim p^\oplus_Y(\bm{y})$ be a \concrete random variable. 
In order to encode the face of $Y$ losslessly and to ensure a $N$-bit precise value of a point in that face we need the following number of bits: %
\begin{equation}\label{eq:coding_entropy}
H^{\oplus}_N(Y) = H^{\oplus}(Y) + N\sum_{k=1}^K (k-1) \sum_{f \in \mathcal{F}(\triangle_{K-1}) \atop \mathrm{dim}(f) = k-1} P_F(f).
\end{equation}
\end{proposition}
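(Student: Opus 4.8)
The plan is a two-stage coding argument that mirrors the factorization $p^\oplus_Y(\bm{y}) = P_F(f)\,p_{Y\mid F}(\bm{y}\mid f)$ from Definition~\ref{def:concrete_rv}: first transmit the face $F$ of $Y$ (equivalently, the sparsity pattern of $\bm{y}$), then transmit the location of $Y$ within $\mathrm{ri}(f)$ to $N$ bits of precision. I will compute the optimal expected length of each stage separately and add them, with all logarithms understood in base two so that lengths are measured in bits. For the discrete stage, $F$ ranges over the finite set $\bar{\mathcal{F}}(\triangle_{K-1})$ with pmf $P_F$, so by Shannon's source coding theorem the optimal expected number of bits to encode $F$ losslessly is exactly the discrete entropy over faces, $H(F) = -\sum_{f} P_F(f)\log P_F(f)$.

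For the continuous stage, I would first note that, conditioned on a fixed face $f$, the density $p_{Y\mid F}(\cdot\mid f)$ lives on $\mathrm{ri}(f)$ with respect to the $\dim(f)$-dimensional Lebesgue measure, and $\mathrm{ri}(f)$ is an open subset of a $\dim(f)$-dimensional affine subspace; hence $Y\mid F=f$ is a genuine $\dim(f)$-dimensional continuous random variable. I would then invoke the quantization result cited in the excerpt, namely the $D$-dimensional extension of \citet[Theorem~9.3.1]{cover2012elements}: encoding such a variable to $N$ bits of precision per coordinate costs, optimally and on average,
\[
\left(-\int_{f} p_{Y\mid F}(\bm{y}\mid f)\log p_{Y\mid F}(\bm{y}\mid f)\right) + N\,\dim(f)\quad\text{bits.}
\]
The $0$-dimensional faces are handled automatically: there the conditional law is a point mass, its ``differential entropy'' term vanishes, and $N\dim(f)=0$, so vertices contribute nothing to the continuous stage.

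To assemble the two stages I would take the expectation of the conditional cost over $F\sim P_F$ and add the first-stage cost $H(F)$; by the law of total expectation the total is
\[
H(F) + \sum_{f} P_F(f)\left(-\int_{f} p_{Y\mid F}(\bm{y}\mid f)\log p_{Y\mid F}(\bm{y}\mid f)\right) + N\sum_{f} P_F(f)\,\dim(f).
\]
The first two terms are precisely $H(F)+H(Y\mid F)=H^{\oplus}(Y)$ by Definition~\ref{def:direct_sum_entropy}. For the last term I would regroup the faces by dimension: writing $\dim(f)=k-1$ whenever the support has size $k$ and summing over $k\in\{1,\dots,K\}$ turns $\sum_f P_F(f)\dim(f)$ into $\sum_{k=1}^{K}(k-1)\sum_{\dim(f)=k-1}P_F(f)$, which yields \eqref{eq:coding_entropy}.

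The step I expect to be the main obstacle is the continuous stage. The cited theorem is stated for $\mathbb{R}^D$ with axis-aligned quantization, whereas here $\mathrm{ri}(f)$ sits on an affine slice of $\mathbb{R}^K$; one must therefore fix a $\dim(f)$-dimensional coordinate chart on the affine hull of $f$, interpret ``$N$-bit precision'' relative to that chart, and check that the differential-entropy term transforms consistently so that the additive volume correction is exactly the $N\dim(f)$ absorbed by the quantization. I would also need to argue that optimality (both the achievability and the converse of the two-stage scheme) is preserved by the deterministic map $Y\mapsto F$, so that no cross-term arises between encoding the face and encoding the within-face location; everything beyond this is routine bookkeeping.
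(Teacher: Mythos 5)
Your two-stage argument — encode the face with an optimal lossless code costing $H(F)$ bits, then encode the within-face location via the $D$-dimensional extension of Cover--Thomas Theorem~9.3.1 costing the conditional differential entropy plus $N\dim(f)$ bits, and average over $P_F$ — is exactly the reasoning the paper uses (it gives only this informal sketch in the paragraph preceding the proposition, with no separate formal proof). Your proposal is correct and in fact more careful than the paper, since you explicitly flag the coordinate-chart issue on the affine hull of each face and the vanishing contribution of the $0$-dimensional faces, both of which the paper leaves implicit.
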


\paragraph{Example: Gaussian sparsemax.} 
Revisiting our Gaussian sparsemax running example with $K=2$, we have 
\begin{equation}
p_Y(y) = P_0\delta_{0}(y) + P_1\delta_{1}(y) + \mathcal{N}(y; z, \sigma^2),
\end{equation}
with $P_0 + P_1 + \int_{0}^1 \mathcal{N}(y; z, \sigma^2) dy = 1$.
The direct sum entropy becomes
    \begin{align}
    H(Y) &= H(F) + H(Y\mid F)\nonumber\\
    &= H([P_0, P_1, 1-P_0-P_1]) - (1-P_0-P_1) \int_{0}^{1} \frac{\mathcal{N}(y; z, \sigma^2) }{1-P_0-P_1} \log \frac{\mathcal{N}(y; z, \sigma^2) }{1-P_0-P_1} dy\nonumber\\
    &= \underbrace{-P_0 \log P_0 - P_1 \log P_1}_{\text{discrete part}} \underbrace{- \int_{0}^{1} \mathcal{N}(y; z, \sigma^2) \log \mathcal{N}(y; z, \sigma^2)  dy}_{\text{continuous part}}.
    \end{align}
For the Gaussian sparsemax \eqref{eq:gaussian_sparsemax}, we have $P_0 = \frac{1-\mathrm{erf}(z/(\sqrt{2}\sigma))}{2}$, $P_1 = \frac{1+\mathrm{erf}((z-1)/(\sqrt{2}\sigma))}{2}$, and 
\begin{align}
\lefteqn{-\int_{0}^{1} \mathcal{N}(y; z, \sigma^2) \log \mathcal{N}(y; z, \sigma^2)  dy=}\nonumber\\ &= \int_{0}^{1} \mathcal{N}(y; z, \sigma^2) \left(\log(\sqrt{2\pi\sigma^2}) + \frac{(y-z)^2}{2\sigma^2}\right)dy \nonumber\\
&= (1-P_0-P_1) \log(\sqrt{2\pi\sigma^2}) + \frac{\sigma}{2} \int_{\frac{-z}{\sigma}}^{\frac{1-z}{\sigma}} \mathcal{N}(t; 0, 1) t^2 dt \nonumber\\
&= (1-P_0-P_1) \log(\sqrt{2\pi\sigma^2}) \nonumber\\ 
& \quad+ \frac{\sigma}{2} \left( \frac{\mathrm{erf}\left(\frac{1-z}{\sqrt{2\sigma^2}}\right) - \mathrm{erf}\left(-\frac{z}{\sqrt{2\sigma^2}}\right)}{2} - \frac{1-z}{\sigma}\mathcal{N}\left(\frac{1-z}{\sigma}; 0, 1\right) - \frac{z}{\sigma}\mathcal{N}\left(-\frac{z}{\sigma}; 0, 1\right)\right),
\end{align}
which leads to a closed form for the entropy.


\begin{figure}[t]
\begin{center}
\includegraphics[width=0.7\columnwidth]{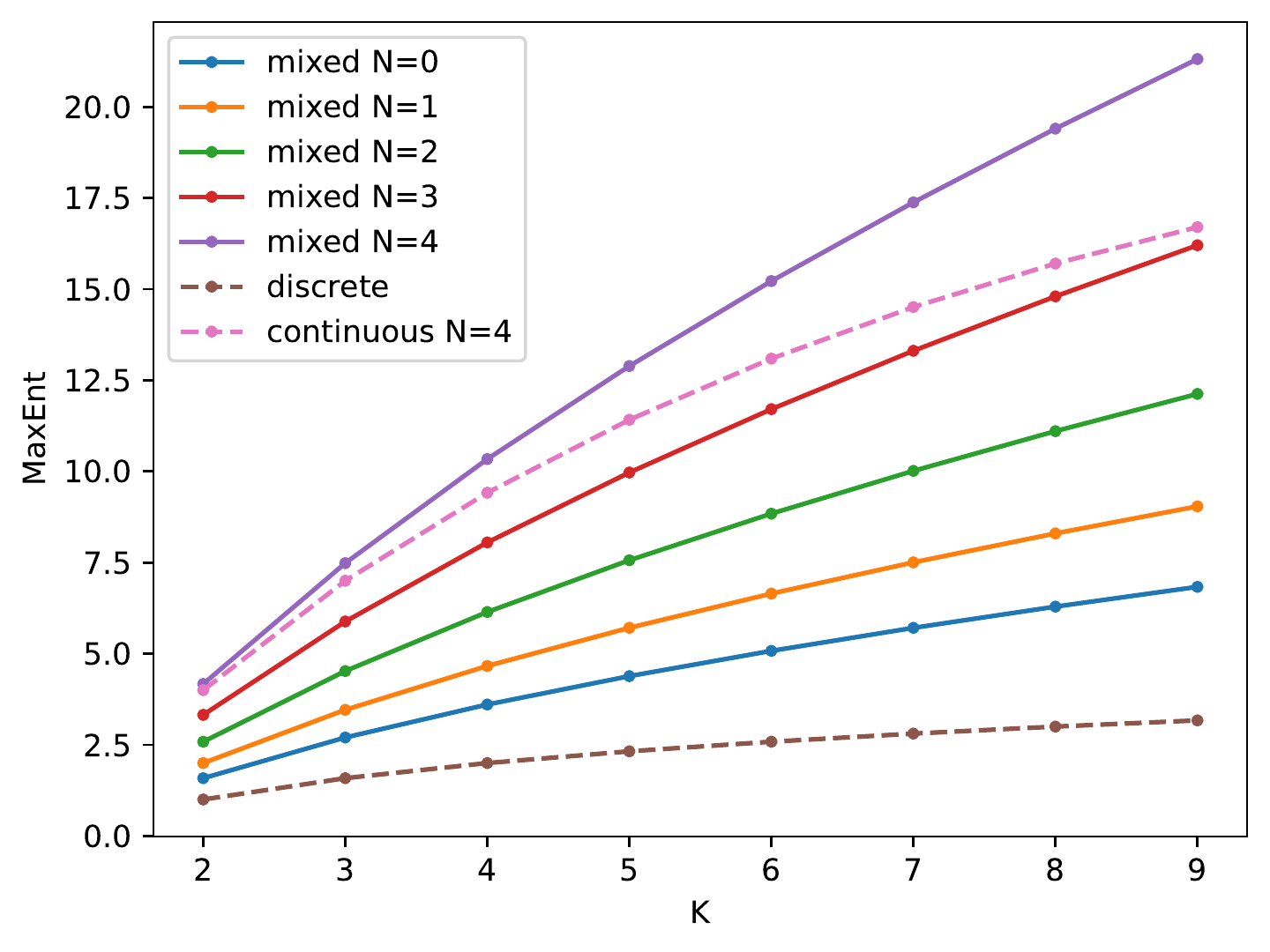}
\caption{Maximum entropies for \concrete distributions for several values of bit precision $N$, as a function of the simplex dimensionality $K-1$. Shown are also the maximum entropies for the corresponding discrete and continuous cases, for comparison.\label{fig:maxent}}
\label{default}
\end{center}
\end{figure}

\paragraph{Maximum entropy density in the full simplex.} 
An important question is what is the distribution $p^\oplus_Y(\bm{y})$ with the largest entropy in the \emph{full} simplex. Considering only the maximal face, which corresponds to $\mathrm{ri}(\triangle_{K-1})$, this is the flat distribution, whose entropy is given in \eqref{eq:entropy_flat_dirichlet}. In our definition of entropy in \eqref{eq:entropy_simplex} this corresponds to a deterministic $F$ which puts all probability mass in this maximal face. But constraining ourselves to a single face is quite a limitation, and in particular \emph{knowing} this constraint provides valuable information that intuitively should reduce entropy!%
\footnote{At the opposite extreme, if we only assign probability to pure vertices, \textit{i.e.}, if we constrain to \emph{minimal} faces, a uniform choice leads to a (Shannon) entropy of $\log K$. We will see in the sequel that looking at \emph{all} faces further increases entropy.} %
What if we consider densities that assign probability to the boundaries? 
Looking at \eqref{eq:entropy_simplex}, we see that the differential entropy term $H(Y \mid F=f)$ can be maximized separately for each $f$, the solution being the flat distribution on face $f \simeq \triangle_{k-1}$, which has entropy $-\log (k-1)!$, where $1 \le k \le K$. By symmetry, all faces of the same dimension $k-1$ look the same, and there are ${K \choose k}$ of them. Therefore the maximal entropy distribution is attained with $P_F$ of the form $P_F(f) = g(k) / {K \choose k}$ where $g : [K] \rightarrow \mathbb{R}_+$ is a function satisfying $\sum_{k=1}^K g(k) = 1$ (which can be regarded as a categorical probability mass function). 
If we choose a precision of $N$ bits, this leads to:
\begin{align}
H_N^\oplus(Y) &= -\sum_{f \in \bar{\mathcal{F}}(\triangle_{K-1})} P_F(f) \log P_F(f) 
+ \sum_{f \in \bar{\mathcal{F}}(\triangle_{K-1})} P_F(f) \left(-\int_{f} p_{Y\mid F}(\bm{y}\mid f) \log p_{Y\mid F}(\bm{y}\mid f)\right)\nonumber\\
& \quad + N\sum_{k=1}^K (k-1)g(k) \\
&= -\sum_{k=1}^K g(k) \log \frac{g(k)}{{K \choose k}}
- \sum_{k=1}^K g(k) (\log (k-1)! - N(k-1)) \nonumber\\
&= -\sum_{k=1}^K g(k) \log g(k) + 
\sum_{k=1}^K g(k) \log \frac{{K \choose k} 2^{N(k-1)}}{(k-1)!}.
\end{align}
This is a entropy-regularized argmax problem, hence the $g(k)$ that maximizes this objective is the softmax transformation of the vector with components $\log \frac{{K \choose k}2^{N(k-1)}}{(k-1)!}$, that is, 
\begin{equation}
g(k) = \frac{\frac{{K \choose k}2^{N(k-1)}}{(k-1)!}}{\sum_{j=1}^K \frac{{K \choose j}2^{N(j-1)}}{(j-1)!}}, 
\end{equation}
and the maximum entropy value is %
\begin{equation}
H^\oplus_{N, \mathrm{max}}(Y) = \log {\sum_{k=1}^K \frac{{K \choose k}2^{N(k-1)}}{(k-1)!}} = \log L_{K-1}^{(1)}(-2^N).
\end{equation}
where $L_{n}^{(\alpha)}(x)$ denotes the \textbf{generalized Laguerre polynomial} \citep{sonine1880recherches}. 
For $K=2$ this value is $\log(2 + 2^N)$, for $K=3$ it is $\log(3 + 3\cdot 2^N + 2^{2N-1})$, etc. 
A plot is shown in Figure~\ref{fig:maxent}.

For example, for $K=2$, we obtain $g(1) = \frac{2}{2 + 2^N}$,  $g(2) = \frac{2^N}{2 + 2^N}$, and $H^\oplus_N(Y) = \log(2 + 2^N)$, therefore, in the worst case, we need at most $\log_2 (2 + 2^N)$ bits to encode $Y \in \triangle_1$ with bit precision $N$. 
This is intuitive: the faces of $\triangle_1$ are the two vertices $\{(0,1)\}$ and $\{(1,0)\}$ and the line segment $[0,1]$. The first two faces have a probability of $\frac{1}{2+2^N}$ and the last one have a probability $\frac{2^N}{2 + 2^N}$. To encode a point in the simplex we first need to indicate which of these three faces it belongs to (which requires $\frac{2}{2+2^N} \log_2 (2+2^N) + \frac{2^N}{2 + 2^N} \log_2 \frac{2 + 2^N}{2^N} = \log_2 (2+2^N) - \frac{N2^N}{2+2^N}$ bits), and with $\frac{2^N}{2 + 2^N}$ probability we need to encode a point uniformly distributed in the segment $[0,1]$ with $N$ bit precision, which requires extra $\frac{N2^N}{2 + 2^N}$ bits on average. Putting this all together, the total number of bits is $\log_2 (2+2^N)$, as expected. 



\paragraph{KL divergence and mutual information.}
Since the direct sum entropy $H^\oplus(Y)$ in Definition~\ref{def:direct_sum_entropy} equals a ``classical'' joint entropy $H(Y,F)$, extensions to Kullback-Leibler (KL) divergence and mutual information are straightforward and they are both non-negative. 

\begin{definition}[KL divergence]
The KL divergence between distributions $p_Y^\oplus \equiv (P_F, p_{Y\mid F})$ and $q_Y^\oplus \equiv (Q_F, q_{Y\mid F})$ is:
\begin{small}
\begin{align}\label{eq:kl_simplex}
\lefteqn{KL^\oplus(p_Y^\oplus\|q_Y^\oplus):=}\nonumber\\ 
&\quad := KL(P_F\|Q_F) + \mathbb{E}_{f \sim P_F} \left[ KL(p_{Y\mid F}(\cdot \mid F=f) \| q_{Y\mid F}(\cdot \mid F=f) \right] \\
&\quad = \underbrace{-\sum_{f \in \bar{\mathcal{F}}(\triangle_{K-1})} P_F(f) \log \frac{P_F(f)}{Q_F(f)}}_{\text{discrete KL}} 
+ \sum_{f \in \bar{\mathcal{F}}(\triangle_{K-1})} P_F(f) \underbrace{\left(-\int_{f} p_{Y\mid F}(\bm{y}\mid f) \log \frac{p_{Y\mid F}(\bm{y}\mid f)}{q_{Y\mid F}(\bm{y}\mid f}\right)}_{\text{continuous KL}}.\nonumber
\end{align}
\end{small}
\end{definition}
Intuitively, the KL divergence between $p_Y^\oplus$ and $q_Y^\oplus$ expresses the additional average code length if we encode variable $Y \sim p_Y^\oplus(\bm{y})$ with a code that is optimal for distribution $q_Y^\oplus(\bm{y})$. 

Note that the KL divergence becomes $+\infty$ if $\mathrm{supp}(P_F) \nsubseteq \mathrm{supp}(Q_F)$ (\textit{i.e.}, if  $p_Y$ assigns non-zero probability to a face which has zero probability under $q_Y$ -- in terms of the face lattice in Figure~\ref{fig:simplex_decomposition}, for the KL divergence to be finite, there must be a path in the diagram from the face support of $p_Y$ to the face support of $q_Y$) or if there is some face where $\mathrm{supp}(p_{Y\mid F}(\cdot \mid F=f)) \nsubseteq \mathrm{supp}(q_{Y\mid F}(\cdot \mid F=f))$.%
\footnote{In particular, this means that \concrete distributions shall not be used as a relaxation in VAEs with purely discrete priors using the ELBO -- rather, the prior should be also \concrete.}

\begin{definition}[Mutual information]
For \concrete random variables $Y$ and $Z$, the mutual information between $Y$ and $Z$ is 
\begin{align}
I^\oplus(Y; Z) &= H^\oplus(Y) - H^\oplus(Y\mid Z) \nonumber\\
&= H(F) + H(Y\mid F) - H(F\mid Z) + H(Y \mid F, Z)\nonumber\\ 
&= I(F; Z) + I(Y; Z \mid F) \ge 0. 
\end{align}
\end{definition}
With these ingredients it is possible to provide counterparts for channel coding theorems by combining Shannon's discrete and continuous channel theorems.

\section{\Concrete Languages}\label{sec:concrete_languages}

%

So far, we looked as probability distributions associated to a single symbol occurrence. However, communication requires the generation of multiple symbols, better expressed as \textit{strings}. 
In this section, we will explore connections between the framework developed so far and formal languages. 
There are two ways through which sparse distributions over symbols could be applicable to strings:
\begin{itemize}
\item We could use it to build a weighted lattice or search tree whose weights correspond to the symbol probabilities. This has been done with the entmax transformation and autoregressive models for sequence-to-sequence prediction by \citet{peters2019sparse}. An interesting direction would be to introduce a \textit{merge} operation to such models to prevent the search tree to grow exponentially.
\item Another way, which we will present in this section, is to consider sequences of \concrete symbols and the languages formed by such sequences.
\end{itemize}
We start by quickly reviewing weighted finite state automata on discrete (finite) alphabets, and then generalize them to the \concrete case.

\subsection{Weighted Finite State Automata}\label{sec:wfsa}

Let $\Sigma=[K]$ denote our alphabet. The Kleene closure of $\Sigma$, denoted $\Sigma^\star$, is the set of all finite strings in the alphabet including the empty string $\epsilon$, $\Sigma^\star := \{\epsilon\} \cup \Sigma \cup \Sigma^2 \cup \ldots$. 
We denote by $\triangle(\Sigma) := \triangle_{K-1}$ the set of distributions over symbols in $\Sigma$.

Let $(\mathbb{K}, \oplus, \otimes, \bar{0}, \bar{1})$ be a semiring. A \textbf{weighted finite-state automaton} (WFSA) is a tuple $(\Sigma, S, I, F, \delta, \lambda, \rho)$, where $\Sigma$ is the alphabet, $S$ is a finite non-empty set of states, $I \subseteq S$ is the set of initial states, $F \subseteq S$ is the set of final states, $\delta:S\times (\Sigma \cup \{\epsilon\}) \times S \rightarrow \mathbb{K}$ is the transition function, $\lambda: I \rightarrow \mathbb{K}$ is the initial weight function, and $\rho: F \rightarrow \mathbb{K}$ is the final weight function. 

A WFSA can be represented as a directed, labeled, and weighted graph, where the vertices are the states $S$ and the labeled and weighted edges are all pairs $(s,t, a, w) \in S\times S \times (\Sigma \cup \{\epsilon\}) \times \mathbb{K}$ such that $\delta(s,a,t) = w$ (where edges with weight $\bar{0}$ can be omitted). Vertices corresponding to initial and final states are decorated with the corresponding values of the functions $\lambda$ and $\rho$. See Figure~\ref{fig:wfsa}  for an illustration. 

The weight of a path from the initial state to a final state is the product $\otimes$ of weights of all the edges in that path times the weight of the initial and final states. The weight of a string $x \in \Sigma^\star$ according to an WFSA $\mathcal{A}$, denoted $[[\mathcal{A}]](x)$, is the sum $\oplus$ of the weights of all paths consistent with $x$.

A WFSA is said to be \textbf{deterministic} if it has a unique initial state, no epsilon-transitions, and if no two transitions leaving any state share the same input label. A WFSA over the Boolean semiring $(\mathbb{K}, \oplus, \otimes) = (\{0,1\}, \vee, \wedge)$ is simply called a \textbf{finite-state automaton} (FSA). 
In the sequel, whenever we are not talking about an FSA, we will always assume that $(\mathbb{K}, \oplus, \otimes) = (\mathbb{R}_+, +, \times)$, called the \textit{probability semiring} \citep{mohri2004weighted}.  

Given a string $x \in \Sigma^\star$, we say that an FSA $\mathcal{A}$ \textit{accepts} $x$ if $[[\mathcal{A}]] = \bar{1}$. 
A \textbf{language} $\mathcal{L}$ is a set of strings, $\mathcal{L} \subseteq \Sigma^\star$. 
We say that an FSA \textit{recognizes} $\mathcal{L}$ if it accepts all the strings in $\mathcal{L}$ and no other strings. 
Any FSA can be ``determinized,'' (\textit{i.e.}, transformed into a equivalent deterministic FSA that recognizes the same language). This determinization can be achieved through the \textbf{powerset construction} \citep{rabin1959finite}, which in the worst case increases  the number of states exponentially. This establishes an equivalence between the two automata. 
On the other hand, WFSAs may not be determinizable (\textit{i.e.}, turned into an equivalent deterministic WFSA that assigns every string the same weight), unless additional properties are satisfied \citep{mohri2004weighted}. 
A language $\mathcal{L}$ is called \textbf{regular} if there is an FSA that recognizes $L$. Regular languages have many closure properties \citep{hopcroft2001introduction}: the union, intersection, negation, and concatenation of regular languages is also a regular language.

\begin{figure}[t]
\begin{center}
\includegraphics[width=1\columnwidth]{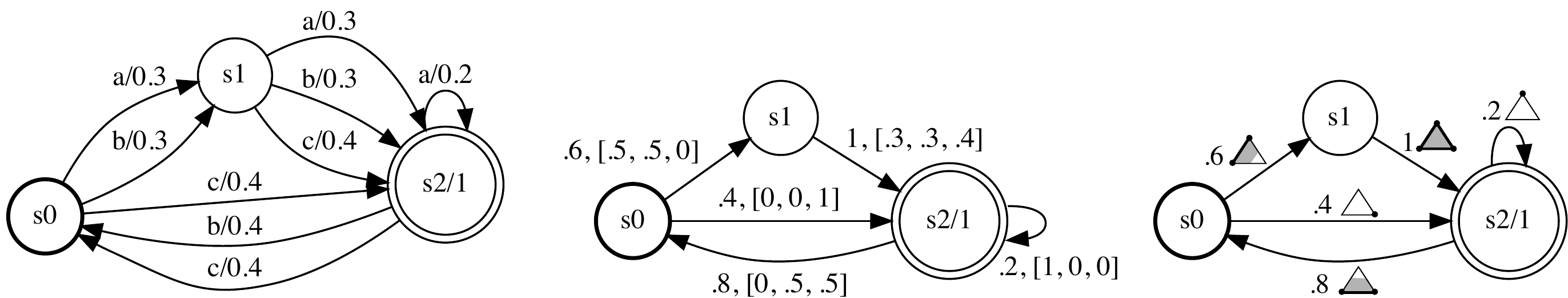}
\caption{Two representations of the same WFSA, and a representation of a MWFSA. Left: Typical representation of an WFSA with an arrow for each symbol connecting two states. Middle: A more compact representation represented as a weighted graph along with a vector of emission probabilities for each arrow. Right: Representation of a MWFSA where emission probabilities are replaced by densities in the simplex.\label{fig:wfsa}}
\label{default}
\end{center}
\end{figure}

\paragraph{Decomposition of WFSA transition weights using probabilities.} 
To simplify, we will assume that the WFSA has no epsilon transitions (they can always be removed if needed). 
We can decompose the transition function of a WFSA as $\delta(s,a,t) = w(s,t) P(a \mid s,t)$, where $w(s,t) \in \mathbb{R}_+$ and $P(\cdot \mid s,t)$ is a probability mass function over $a \in \Sigma$. 
If the WFSA is stochastic -- an equivalent stochastic WFSA can always be obtained by the weight push algorithm \citep{mohri2004weighted} -- then we can further write $w(s,t)$ in the form $w(s,t) = P(t \mid s)$ for a probability distribution $P(t \mid s)$, in which case the transition function can be seen as a product of \textbf{transition} and \textbf{emission} probabilities, as a \textbf{hidden Markov model} \citep{rabiner1986introduction}.
We denote by $\bm{y}(s,t) := P(\cdot \mid s,t) \in \triangle(\Sigma)$ the vector of emission probabilities associated with the transition between a state $s$ and a state $t$. 
For a deterministic WFSA, we have a disjoint union $\Sigma = \bigsqcup_{t \in S} \mathrm{supp}(\bm{y}(s,t))$ for every state $s \in S$. 
If the WFSA is not deterministic, we have the weaker relation $\Sigma \supseteq \bigcup_{t \in S} \mathrm{supp}(\bm{y}(s,t))$. 
This way, we can represent a WFSA alternatively as a directed graph where the vertices are the states $S$, but with fewer edges: the edges are all pairs $(s,t) \in S\times S$ such that $w(s,t) > 0$. To each edge, we associate a point in the simplex, $\bm{y}(s,t) \in \triangle(\Sigma)$, which represents the probability mass function $P(a \mid s,t)$. This is shown in Figure~\ref{fig:wfsa}.%
\footnote{For an FSA, this construction can be recovered if we define $\bm{y}(s,t)$ as a uniform distribution over its support, and $w(s,t) = |\mathrm{supp}(\bm{y}(s,t))|$, while deleting edges for which there are no transitions.} %

\subsection{\Concrete Strings and Languages}

In this section, we extend symbolic languages and classes of languages to our mixed discrete-continuous space. We will define ``\concrete strings'' as sequences of symbols which do not need to be \textit{pure} -- they can be a (sparse) \textit{mixture} of multiple symbols. From this concept, we will go on to define \concrete languages and a class of \concrete regular languages, for which we derive some properties. 


We start by extending the definitions of \S\ref{sec:wfsa} as follows.

\begin{definition}[\Concrete strings and languages]

Let $\triangle(\Sigma) := \triangle_{K-1}$ denote the set of distributions over symbols in $\Sigma$. 
\begin{itemize}
\item The Kleene closure of $\triangle(\Sigma)$ is $\triangle(\Sigma)^\star := \{\epsilon\} \cup \triangle(\Sigma) \cup \triangle(\Sigma)^2 \cup \ldots$. 
\item An element of $\triangle(\Sigma)^\star$ is called a \textbf{\concrete string}. 
\item A \textbf{\concrete language} $\mathcal{L}$ is a set of \concrete strings, $\mathcal{L} \subseteq \triangle(\Sigma)^\star$. 
\item The \textbf{union} of \concrete languages $\mathcal{L}_1$ and  $\mathcal{L}_2$ is the set of \concrete strings that belong to either of the languages; their \textbf{intersection} is the set of \concrete strings that belong to both languages; the \textbf{concatenation} of  $\mathcal{L}_1$ and  $\mathcal{L}_2$ is set $\mathcal{L}_1 + \mathcal{L}_2 := \{x_1 x_2 \mid x_1 \in \mathcal{L}_1 \wedge x_2 \in \mathcal{L}_2\}$. 
The \textbf{negation} of a language $\mathcal{L}$ is the language $\triangle(\Sigma)^\star \setminus \mathcal{L}$.
\item The \textbf{skeleton} of a \concrete string $x \in \triangle(\Sigma)^\star$ is the (symbolic) string $\mathrm{skel}(x) \in (2^\Sigma \setminus \{\varnothing\})^\star$ of the same length with $(\mathrm{skel}(x))_i := \mathrm{supp}(x_i)$. 
\item The skeleton of a \concrete language is the set $\mathrm{skel}(\mathcal{L}) := \{ \mathrm{skel}(x) \mid x \in \triangle(\Sigma)^\star \}$. 
This is a (symbolic) language over the alphabet $2^\Sigma \setminus \{\varnothing\}$, \text{i.e.}, $\mathrm{skel}(\mathcal{L}) \subseteq (2^\Sigma \setminus \{\varnothing\})^\star$. 
\item A \textbf{projection} of a \concrete string $x \in \triangle(\Sigma)^\star$ is a (symbolic) string $u \in \Sigma^\star$ of the same length such that $u_i \in \mathrm{supp}(x_i)$ for each position $i$; we denote this by $u \in \mathrm{proj}(x)$. 
\item The projection of a \concrete language $\mathcal{L}$ is the discrete language in the same alphabet formed by all projections of \concrete strings in $\mathcal{L}$, $\mathrm{proj}(\mathcal{L}) := \{ u \in \Sigma^\star \mid u \in \mathrm{proj}(x), \,\, x \in \triangle(\Sigma)^\star \}$. 
\end{itemize}
\end{definition}

Intuitively, a \concrete language is a language made of ``symbols'' which do not need to be pure -- they can be a mixture of one or more symbols weighted by a probability. The skeleton of a \concrete language, on the other hand, ignores the weights but retains the subsets of symbols that are mixed; therefore, it can be seen as a language over the powerset vocabulary (removing the empty set) $2^\Sigma \setminus \{\varnothing\}$.


For example, 
$x = \texttt{a}\texttt{b}\left\{{_{\texttt{b:0.8}}^{\texttt{a:0.2}}}\right\}\texttt{a}$
is a \concrete string over $\Sigma=\{\texttt{a}, \texttt{b}\}$, where the two first and last symbols are pure, and the third symbol is the point $[0.2, 0.8]$ in the simplex $\triangle_1$, which we can interpret as a mixture of $a$ and $b$. 
The only two projections of $x$ are $\text{\tt abaa}$ and $\text{\tt abba}$. The skeleton of $x$ is the string $\texttt{a}\texttt{b}{_{\texttt{b}}^{\texttt{a}}}\texttt{a}$, over $2^\Sigma\setminus \{\varnothing\} = \{\texttt{a}, \texttt{b},_{\texttt{b}}^{\texttt{a}}\}$.

\paragraph{\Concrete WFSA.} 
The next step is to define
\begin{definition}[\Concrete WFSA]
A \textbf{\concrete weighted finite-state automaton} (MWFSA) over the probability or Boolean semiring $(\mathbb{K}, \oplus, \otimes, \bar{0}, \bar{1})$ is a tuple $(\Sigma, S, I, F, \delta, \lambda, \rho)$ 
where $\Sigma$ is the alphabet, $S$ is a finite non-empty set of states, $I \subseteq S$ is the set of initial states, $F \subseteq S$ is the set of final states, $\delta:S\times (\triangle(\Sigma)\cup \triangle(\{\epsilon\}) ) \times S \rightarrow \mathbb{K}$ is the transition function, $\lambda: I \rightarrow \mathbb{K}$ is the initial weight function, and $\rho: F \rightarrow \mathbb{K}$ is the final weight function. 
An MWFSA over the Boolean semiring is simply called a \textbf{\concrete finite-state automaton} (MFSA). 
\end{definition}
That is, an MWFSA is similar to a WFSA, except that the transition function is $\delta:S\times (\triangle(\Sigma)\cup \{\epsilon\} ) \times S \rightarrow \mathbb{K}$. 
That is, instead of symbols of a finite alphabet, $a \in \Sigma$, each transition in a MWFSA is dictated by a point in the simplex, $\bm{y} \in \triangle(\Sigma)$. 

An MWFSA cannot be visualized as a directed labeled weighted graph as we did initially for the WFSA, since there would be uncountably many labels. 
Instead, we use a \textbf{decomposition of transition weights} similarly to what we did for the WFSA, but now using densities instead of probability mass functions. That is, we decompose the transition functions as $\delta(s,\bm{y},t) = w(s,t) p_Y(\bm{y} \mid s,t)$, where $w(s,t) \in \mathbb{R}_+$ and $p_Y(\cdot \mid s,t)$ is a probability density function over $\bm{y} \in \triangle(\Sigma)$. 
We require that $\mathrm{supp}(p_Y(\cdot \mid s,t))$ is a \textbf{measurable set} (\textit{i.e.} it belongs to the $\sigma$-algebra of the direct sum measure in Definition~\ref{def:direct_sum}; see Appendix~\ref{sec:measure} for details). 
If the MWFSA is stochastic, then we further have $w(s,t) = P(t \mid s)$, in which case the transition function is a product of a \textbf{transition probability} and an \textbf{emission density}.
For a deterministic MWFSA, we have a disjoint union $\triangle(\Sigma) = \bigsqcup_{t \in S} \mathrm{supp}(p_Y(\cdot \mid s,t))$ for every state $s \in S$. 
If the MWFSA is not deterministic, we have the weaker relation $\triangle(\Sigma) \supseteq \bigcup_{t \in S} \mathrm{supp}(p_Y(\cdot \mid s,t))$. 
This way, we can represent a MWFSA as a directed graph where the vertices are the states $S$ and the edges are all pairs $(s,t) \in S\times S$ such that $w(s,t) > 0$. To each edge, we associate a density function, $p_Y(\cdot \mid s,t)$. This is shown in Figure~\ref{fig:wfsa}.%
\footnote{For a MFSA, this construction can be recovered if we define $p_Y(\cdot \mid s,t)$ as a uniform density over its support, and $w(s,t) = |\mathrm{supp}(p_Y(\cdot \mid s,t))|$, while deleting edges for which there are no transitions.} %

\paragraph{\Concrete regular languages and closure properties.} 
Let us assume an MFSA $\mathcal{A}$. 
A \concrete string $x = (\bm{y}_1, \ldots, \bm{y}_N) \in \triangle(\Sigma)^\star$ is accepted by $\mathcal{A}$ iff $[[\mathcal{A}]](x) = 1$. A MFSA recognizes a language $\mathcal{L}$ if it accepts all strings in $\mathcal{L}$ and no other string. 
We say that a \concrete language is \textbf{regular} if it is recognized by a MFSA.

\begin{proposition}\label{prop:regular}
We have the following:
\begin{enumerate}
\item Any regular language is also a \concrete regular language.
\item Any nondeterministic MFSA is equivalent to some deterministic MFSA.
\item The skeleton of a \concrete regular language over $\Sigma$ is a regular language over $2^\Sigma$.
\item The projection of a \concrete regular language over $\Sigma$ is a regular language over $\Sigma$.
\item \Concrete regular languages are close under union, intersection, negation, and concatenation.
\end{enumerate}
\end{proposition}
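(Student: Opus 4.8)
The plan is to work throughout with the edge-labeled reformulation of an MFSA supplied by the transition-weight decomposition: for the Boolean semiring each edge $(s,t)$ carries the measurable set $E(s,t) := \mathrm{supp}(p_Y(\cdot\mid s,t)) \subseteq \triangle(\Sigma)$, and a \concrete string $x = (\bm{y}_1,\ldots,\bm{y}_N)$ is accepted iff there is a path $s_0 \to \cdots \to s_N$ with $s_0 \in I$, $s_N \in F$, and $\bm{y}_i \in E(s_{i-1},s_i)$ for every $i$. In this picture an MFSA is simply a finite automaton whose uncountable alphabet $\triangle(\Sigma)$ is read through measurable edge labels, and the whole proposition becomes a sequence of automaton constructions in which the only genuinely new ingredient, compared with the classical theory, is that the labels live in the $\sigma$-algebra of the direct sum measure (Definition~\ref{def:direct_sum}). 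For the first claim I would embed $\Sigma^\star$ into $\triangle(\Sigma)^\star$ by sending each symbol $a$ to its vertex $\bm{e}_a$; given an FSA recognizing a regular $L$, I build an MFSA on the same states with $E(s,t) := \{\bm{e}_a \mid \delta(s,a,t)=1\}$, a finite set of vertices and hence measurable (each a $0$-dimensional face under the counting measure). This MFSA accepts exactly the image of $L$, so every regular language is \concrete regular.

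The crux is the second claim, the generalized powerset construction, on which the negation part of the last claim also depends. First I would remove $\epsilon$-transitions by the usual $\epsilon$-closure. Then, taking $2^S$ as the state set with initial state $I$ and final states $\{Q \mid Q \cap F \ne \varnothing\}$, I define for a subset $Q$ and a point $\bm{y}$ the reachable set $R(Q,\bm{y}) := \{t \in S \mid \exists s \in Q,\ \bm{y} \in E(s,t)\}$ and label the deterministic edge $(Q,Q')$ with
\begin{equation}
E'(Q,Q') := \{\bm{y} \in \triangle(\Sigma) \mid R(Q,\bm{y}) = Q'\}.
\end{equation}
The key point, and the one place the argument could break, is that each $E'(Q,Q')$ is a finite Boolean combination (a union over $s$, then intersections and complements over $t$) of the sets $E(s,t)$, so it is measurable because the $\sigma$-algebra is closed under finite unions, intersections, and complements; for fixed $Q$ these regions partition $\triangle(\Sigma)$ (including the dead region $Q'=\varnothing$), which is exactly the determinism condition $\triangle(\Sigma) = \bigsqcup_{Q'} E'(Q,Q')$. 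A routine induction on string length then shows this complete deterministic MFSA accepts the same \concrete strings as the original, proving equivalence.

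For the skeleton and projection claims I would exploit that the support of a point is constant on the relative interior of each face, so the skeleton alphabet $2^\Sigma \setminus \{\varnothing\}$ is in bijection with $\bar{\mathcal{F}}(\triangle_{K-1})$. Starting from an MFSA for $\mathcal{L}$, I build a discrete FSA over $2^\Sigma\setminus\{\varnothing\}$ on the same states with a transition $s \xrightarrow{\mathcal{I}} t$ iff $E(s,t) \cap \mathrm{ri}(f_\mathcal{I}) \ne \varnothing$; since the symbols $\bm{y}_i$ on distinct edges of a path are chosen independently, a skeleton string $\mathcal{I}_1\cdots\mathcal{I}_N$ is realizable along a path iff each edge label meets the corresponding face, so this FSA recognizes exactly $\mathrm{skel}(\mathcal{L})$, which is therefore regular over $2^\Sigma$. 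The projection follows either by the analogous FSA with transitions $s\xrightarrow{a}t$ iff $E(s,t)\cap\{\bm{y}\mid y_a>0\}\ne\varnothing$, or more cleanly by observing that $\mathrm{proj}(\mathcal{L})$ is the image of $\mathrm{skel}(\mathcal{L})$ under the finite substitution $\mathcal{I}\mapsto\{a\mid a\in\mathcal{I}\}$, and regular languages are closed under finite substitution.

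Finally, the closure properties are the standard automaton constructions transported to measurable labels. For union I take the nondeterministic disjoint union of two MFSAs; for intersection the product automaton on $S_1\times S_2$ with edge label $E_1(s_1,t_1)\cap E_2(s_2,t_2)$, measurable as an intersection; for concatenation I link the final states of the first MFSA to the initial states of the second by $\epsilon$-transitions; and for negation I apply the complete deterministic MFSA from the second claim and swap final and non-final states, which works precisely because its edge labels partition $\triangle(\Sigma)$ so that every \concrete string has a unique run. In each case measurability of the new labels is inherited from closure of the $\sigma$-algebra, so the resulting machines are legitimate MFSAs and the class is closed under all four operations. I expect the determinization, specifically verifying the measurability and partition properties of the induced regions $E'(Q,Q')$, to be the main obstacle; everything else reduces to bookkeeping once that is in place.
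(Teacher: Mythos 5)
Your proposal is correct and follows essentially the same route as the paper: the linchpin in both is the generalized powerset construction with states $2^S$, and your regions $E'(Q,Q') = \{\bm{y} \mid R(Q,\bm{y}) = Q'\}$ are exactly the paper's transition function $\gamma$, with the same observation that measurability is inherited because the $\sigma$-algebra is closed under finite Boolean combinations of the supports $E(s,t)$; the skeleton, projection, and concatenation arguments also coincide. The only divergences are minor and in your favor: you prove closure under intersection directly via the product automaton with labels $E_1(s_1,t_1)\cap E_2(s_2,t_2)$ where the paper invokes De Morgan, and you explicitly note that negation requires first passing to a \emph{complete} deterministic MFSA whose edge labels partition $\triangle(\Sigma)$ -- a hypothesis the paper's one-line ``toggle final and non-final states'' leaves implicit.
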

\begin{proof}[Proof sketch]
The key to this proof is to generalize the powerset construction of \citet{rabin1959finite} (which establishes the classic equivalence between deterministic and non-deterministic FSAs). 
To prove 3, note that the skeleton of a \concrete regular language associated to a MFSA can be obtained by deleting the weights in the transition function and relabeling each edge $\bm{y}$ to $\mathrm{supp}(\bm{y})$, which turns that MFSA into a non-deterministic one over $2^\Sigma$. From 2, this must be a regular language. A detailed proof is in Appendix~\ref{sec:regular}.%
\end{proof}




\section{Conclusion and Future Work}

We presented a mathematical framework for handling \concrete random variables, while are an hybrid between discrete and continuous. Key to our framework is the use of a direct sum measure as an alternative to the Lebesgue-Borel and the counting measures, which considers all faces of the simplex. Based on this we present generalizations of information theoretic concepts and regular languages for \concrete symbols. 

We believe the framework described here is only scratching the surface. For example, we are not studying efficient parametrizations of \concrete densities (beyond the already known reparametrization trick) and we are not addressing yet the structured case. 
However, the combinatorial characterization of (convex) polytopes in terms of their face lattice  \citep[Lecture~2, \S2.2]{ziegler1995lectures}, \citep[\S3.2]{grunbaum2003convex} goes beyond the probability simplex. 
This suggests applying this characterization to other transformations which return a sparse vector in a marginal polytope (therefore a point in a lower dimensional face), such as the SparseMAP \citep{niculae2018sparsemap}. 
Another interesting direction is to consider an infinite, countable simplex, which would enable a non-parametric usage of this framework, where the number of faces with nonzero probability can grow unbounded. 



Regarding \concrete languages, in this draft we restricted to \concrete regular languages, a simple class of languages for which we have shown nice closure properties. It is yet to be determined if \concrete languages mimic a similar language hierarchy as their discrete counterparts.


\section*{Acknowledgements}

This work was supported by the European Research Council (ERC StG DeepSPIN 758969). 
I would like to thank Wilker Aziz, who suggested the idea of sampling from a sparsemax-Gaussian distribution, Vlad Niculae, who was involved in initial discussions, Tim Vieira, who answered several questions about order statistics, Sam Power, who pointed out to manifold stratification, and Juan Bello-Rivas, who suggested the name ``mixed random variables.'' 
This manuscript benefitted from valuable feedback from Wilker Aziz, António Farinhas, Tim Vieira, and the DeepSPIN team. 



\bibliographystyle{apalike}
\bibliography{references}

\appendix

\section{Proof of Correctness of Direct Sum Measure}\label{sec:measure}


We start by recalling the definitions of $\sigma$-algebras, measures, and measure spaces. 
A \textbf{$\sigma$-algebra} on a set $X$ is a collection of subsets, $\Omega \subseteq 2^{X}$, which is closed under complements and under countable unions. 
A \textbf{measure} $\mu$ on $(X, \Omega)$ is a function from $\Omega$ to $\mathbb{R} \cup \{\pm \infty\}$ satisfying (i) $\mu(A) \ge 0$ for all $A \in \Omega$, (ii) $\mu(\varnothing) = 0$, and (iii) the $\sigma$-additivity property: $\mu(\sqcup_{j\in\mathbb{N}} A_j) = \sum_{j\in\mathbb{N}} \mu(A_j)$ for every countable collections $\{A_j\}_{j\in\mathbb{N}} \subseteq \Omega$ of pairwise disjoint sets in $\Omega$. 
A \textbf{measure space} is a triple $(X, \Omega, \mu)$ where $X$ is a set, $\mathcal{A}$ is a $\sigma$-algebra on $X$ and $\mu$ is a measure on $(X, \mathcal{A})$. 
An example is the Euclidean space $X=\mathbb{R}^K$ endowed with the Lebesgue measure, where $\Omega$ is the Borel algebra generated by the open sets (\textit{i.e.} the set $\Omega$ which contains these open sets and countably many Boolean operations over them). 

The correctness of the direct sum measure $\mu^\oplus$ comes from the following more general result, which appears (without proof) as exercise I.6 in \citet{conway2019course}.
\begin{lemma}\label{lemma:direct_sum_measure}
Let $(X_k, \Omega_k, \mu_k)$ be measure spaces for $k=1, \ldots, K$. 
Then, $(X, \Omega, \mu)$ is also a measure space, with $X = \bigoplus_{k=1}^K X_k = \prod_{k=1}^K X_k$ (the direct sum or Cartesian product of sets $X_k$), $\Omega = \{A \subseteq X \mid A 
\cap X_k \in \Omega_k,\,\, \forall k \in [K]\}$, and $\mu(A) = \sum_{k=1}^K \mu_k(A \cap X_k)$.
\end{lemma}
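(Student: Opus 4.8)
The plan is to verify directly the two defining properties of a measure space: that $\Omega$ is a $\sigma$-algebra on $X$, and that $\mu$ satisfies nonnegativity, $\mu(\varnothing)=0$, and $\sigma$-additivity. The structure I would exploit throughout is that the blocks $X_k$ are pairwise disjoint and cover $X$ — exactly as the relative interiors $\mathrm{ri}(f)$ partition $\triangle_{K-1}$ in \eqref{eq:disjoint_union_faces} — so that the slicing operation $A \mapsto A \cap X_k$ commutes with complementation and countable union. Each verification then reduces a statement about $X$ to the corresponding statement in the factor $(X_k, \Omega_k, \mu_k)$, where it holds by assumption.

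First I would check that $\Omega$ is a $\sigma$-algebra. Membership of $\varnothing$ and $X$ is immediate, since $\varnothing \cap X_k = \varnothing \in \Omega_k$ and $X \cap X_k = X_k \in \Omega_k$. For closure under complement the key identity is $(X \setminus A) \cap X_k = X_k \setminus (A \cap X_k)$, valid precisely because the $X_k$ partition $X$; the right-hand side lies in $\Omega_k$ by closure of $\Omega_k$ under complement within $X_k$. For countable unions I would use $\left(\bigcup_{j} A_j\right) \cap X_k = \bigcup_j (A_j \cap X_k)$, a countable union of $\Omega_k$-sets, hence in $\Omega_k$.

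Next I would verify the measure axioms. Nonnegativity follows since $\mu(A)$ is a finite sum of the nonnegative quantities $\mu_k(A\cap X_k)$, and $\mu(\varnothing)=\sum_k \mu_k(\varnothing)=0$. The substantive step is $\sigma$-additivity: for pairwise disjoint $\{A_j\}_{j\in\mathbb{N}}\subseteq\Omega$, noting that $\left(\bigsqcup_j A_j\right)\cap X_k = \bigsqcup_j (A_j\cap X_k)$ and that the slices remain pairwise disjoint in $\Omega_k$, I would write
\[
\mu\Bigl(\bigsqcup_j A_j\Bigr) = \sum_{k=1}^K \mu_k\Bigl(\bigsqcup_j (A_j \cap X_k)\Bigr) = \sum_{k=1}^K \sum_j \mu_k(A_j\cap X_k),
\]
applying $\sigma$-additivity of each $\mu_k$ factorwise. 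Swapping the two sums then yields $\sum_j \sum_{k=1}^K \mu_k(A_j\cap X_k) = \sum_j \mu(A_j)$, as required.

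The only point that formally needs justification is this interchange of summation order. Because $K$ is finite, it is a finite sum of nonnegative series, so the swap is legitimate by the nonnegativity of all terms (a degenerate case of Tonelli's theorem for sums), with every equality holding in $[0,\infty]$; thus there is no genuine analytic obstacle. The one conceptually load-bearing ingredient is instead the distributivity of $\cap X_k$ over complementation used in the $\sigma$-algebra step: it is there, and only there, that both disjointness and covering of the blocks $X_k$ are essential, since dropping either would break closure under complement and invalidate the whole construction.
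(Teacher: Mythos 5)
Your proof is correct and follows essentially the same route as the paper's: verify the two $\sigma$-algebra closure properties via the identities $(X\setminus A)\cap X_k = X_k\setminus(A\cap X_k)$ and $\left(\bigcup_j A_j\right)\cap X_k=\bigcup_j(A_j\cap X_k)$, then check the three measure axioms with the same factorwise reduction and the same interchange of the finite sum over $k$ with the countable sum over $j$. The only difference is that you explicitly justify that interchange by nonnegativity (which the paper does silently), so nothing further is needed.
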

\begin{proof}
First, we show that $\Omega$ is a $\sigma$-algebra. We need to show that (i) if $A \in \Omega$, then $\bar{A} \in \Omega$, and (ii) if $A_i \in \Omega$ for each $i \in \mathbb{N}$ then $\bigcup_{i \in \mathbb{N}} A_i \in \Omega$. 
For (i), we have that, if $A \in \Omega$, then we must have $A \cap {X}_k \in \Omega_k$ for every $k$, and therefore $\bar{A} \cap {X}_k = {X}_k \setminus A = {X}_k \setminus (A \cap {X}_k) \in \Omega_k$, since $\Omega_k$ is a $\sigma$-algebra on ${X}_k$. This implies that $\bar{A} \in \Omega$. 
For (ii), we have that, if $A_i \in \Omega$, then we must have $A_i \cap {X}_k \in \Omega_k$ for every $i\in \mathbb{N}$ and $k\in[K]$, and therefore $\left(\bigcup_{i \in \mathbb{N}} A_i\right) \cap {X}_k = \bigcup_{i \in \mathbb{N}} (A_i \cap {X}_k) \in \Omega_k$, since $\Omega_k$ is closed under countable unions. This implies that $\bigcup_{i \in \mathbb{N}} A_i\in \Omega$. 
Second, we show that $\mu$ is a measure. We clearly have $\mu(A) = \sum_{k=1}^K \mu_k(A \cap {X}_k) \ge 0$, since each $\mu_k$ is a measure itself, and hence it is non-negative. 
We also have $\mu(\varnothing) = \sum_{k=1}^K \mu_k(A \cap \varnothing) = \sum_{k=1}^K \mu_k(\varnothing)  = 0$. Finally, if $\{A_j\}_{j\in\mathbb{N}} \subseteq \Omega$ is a countable collection of disjoint sets, we have $\mu(\sqcup_{j\in\mathbb{N}} A_j) = \sum_{k=1}^K \mu_k(\sqcup_{j\in\mathbb{N}} (A_j \cap {X}_k)) = \sum_{k=1}^K \sum_{j\in\mathbb{N}} \mu_k(A_j \cap {X}_k) = \sum_{j\in\mathbb{N}} \sum_{k=1}^K  \mu_k(A_j \cap {X}_k) = \sum_{j\in\mathbb{N}}   \mu(A_j)$.
\end{proof}

We have seen in \eqref{eq:disjoint_union_faces} that the simplex $\triangle_{K-1}$ can be decomposed as a disjoint union of the relative interior of its faces. Each of these relative interiors is an open subset of an affine subspace isomorphic to $\mathbb{R}^{k-1}$, for $k \in [K]$, which is equipped with the Lebesgue measure for $k>1$ and the counting measure for $k=1$. 
Lemma~\ref{lemma:direct_sum_measure} then guarantees that we can take the direct sum of all these affine spaces as a measure space with the direct sum measure $\mu = \mu^\oplus$ of Definition~\ref{def:direct_sum}.

\section{Proof of Proposition~\ref{prop:regular}}\label{sec:regular}

To show 1, note that if $\mathcal{L}$ is regular, there is an FSA that recognizes it; since a FSA is a particular case of a MFSA, we have that $\mathcal{L}$ is also mixed regular.

The key to prove 2 is to generalize the powerset construction of \citet{rabin1959finite} (which establishes the classic equivalence between deterministic and non-deterministic FSAs). 
The powerset construction creates a deterministic MFSA $(\Sigma, 2^S, \{q\}, G, \gamma)$ from a non-deterministic MFSA $(\Sigma, S, I, F, \delta)$ as follows:
\begin{itemize}
\item Each state of the deterministic MFSA is a subset $P \subseteq S$, the starting state is $q \subseteq S$.
\item The set of final states of the deterministic MFSA is $G=\{P \subseteq S \mid P \cap F \ne \varnothing\}$.
\item The transition function $\gamma: 2^S \times \triangle(\Sigma) \times 2^S \rightarrow \{0,1\}$ is defined as $\gamma(P, \bm{y}, Q) = 1$ iff $Q = \bigcup_{s \in P} \{t\in S \mid \delta(s, \bm{y}, t) = 1\}$.
\end{itemize}
The powerset construction blows up the number of states, but keeps it finite.%
\footnote{Note that, since there is a finite number of states, in a deterministic MFSA, the transitions outgoing a state $s$ form a finite (disjoint) partition of $\triangle(\Sigma)$, as $\triangle(\Sigma) = \bigsqcup_{t \in S} \mathrm{supp}(p_Y(\cdot \mid s,t))$. In a non-deterministic MFSA, we only have a finite union $\triangle(\Sigma) \supseteq \bigcup_{t \in S} \mathrm{supp}(p_Y(\cdot \mid s,t))$. When we convert from a non-deterministic MFSA into a deterministic MFSA, we create new partitions which involve the intersection of these support sets. Each of these support sets belong to the $\sigma$-algebra associated with the direct sum measure on the simplex, and therefore their intersections also belong to that $\sigma$-algebra, which makes them measurable.} %

To prove 3, note that the skeleton of a mixed regular language associated to a MFSA can be obtained by deleting the weights in the transition function and relabeling each edge $\bm{y}$ to $\mathrm{supp}(\bm{y})$, which turns that MFSA into a non-deterministic FSA over $2^\Sigma$. From point 2, we have that this must be a regular language. 

To show 4, we start from the MFSA corresponding to the mixed regular language $\mathcal{L}$, delete the weights in the transition function and, for any edge  $\bm{y}$, create $|\mathrm{supp}(\bm{y})|$ many edges, obtaining a non-deterministic FSA that corresponds the projection of  $\mathcal{L}$; hence this projection is a regular language.

Finally, let us prove 5. 
To build an MFSA that recognizes the negation of $\mathcal{L}$, take an MFSA for $\mathcal{L}$ and toggle final and non-final states. 
To build an MFSA that recognizes the union of $\mathcal{L}_1$ and $\mathcal{L}_2$, take their MFSAs and consider the starting states the union of the two sets of starting states. This results on a non-deterministic MFSA that can be determinized using the powerset construction used to prove item 2.
To build an MFSA that recognizes the intersection  $\mathcal{L}_1$ and $\mathcal{L}_2$ use the two constructions above and apply the de Morgan rules.
To build an MFSA that recognizes the concatenation of $\mathcal{L}_1$ and $\mathcal{L}_2$ take the final states of the MFSA of $\mathcal{L}_1$ (removing them as final states) and add $\epsilon$-transitions to the starting state of the MFSA of $\mathcal{L}_2$. Then apply $\epsilon$-removal and convert the non-deterministic MFSA into a deterministic MFSA with the powerset construction.

\end{document}